\author{Phil Chen, Masha Itkina, Ransalu Senanayake, Mykel J. Kochenderfer \\ Stanford University \\ \texttt{\{philhc, mitkina, ransalu, mykel\}@stanford.edu}}
\pgfplotsset{compat=newest}
\pgfplotsset{scaled y ticks=false}
\pgfplotsset{every axis/.append style={
                    label style={font=\footnotesize},
                    tick label style={font=\tiny},
                    title style={font=\footnotesize}
                    }}
\definecolor{neworange}{rgb}{0.902,0.624,0}
\definecolor{newdarkorange}{rgb}{0.835,0.369,0}
\definecolor{newgreen}{rgb}{0,0.620,0.451}
\definecolor{newblue}{rgb}{0.337,0.706,0.914}
\definecolor{newdarkblue}{rgb}{0,0.447,0.698}
\title{Evidential Softmax for Sparse Multimodal Distributions in Deep Generative Models}
\newtheorem{theorem}{Theorem}[section]
\newtheorem{lemma}[theorem]{Lemma}
\newtheorem{definition}{Definition}[section]
\newcommand{\E}{\mathbbm E}
\newcommand{\R}{\mathbbm R}
\renewcommand{\AA}{\mathcal{A}}
\newcommand{\bphi}{\boldsymbol{\phi}}
\newcommand{\BB}{\mathcal{B}}
\newcommand{\CC}{\mathcal{C}}
\newcommand{\ZZ}{\mathcal{Z}}
\newcommand{\softmax}{\textsc{Softmax}}
\newcommand{\evidentialsoftmax}{\textsc{EvSoftmax}}
\newcommand{\evidentialsoftmaxtrain}{\evidentialsoftmax_{\text{train},\epsilon}}
\newcommand{\ahat}{\hat{\alpha}}
\newcommand{\bahat}{\hat{\boldsymbol{\alpha}}}
\newcommand{\bhat}{\hat{\beta}}
\newcommand{\bbhat}{\hat{\boldsymbol{\beta}}}
\newcommand{\vhat}{\hat{v}}
\newcommand{\bvhat}{\hat{\textbf{v}}}
\newcommand{\ii}{\mathbbm{1}}
\DeclarePairedDelimiterX{\infdivx}[2]{(}{)}{%
  #1\;\delimsize|\delimsize|\;#2%
}
\newcommand{\kld}[2]{\ensuremath{\text{KL}\infdivx{#1}{#2}}}
\begin{document}

\maketitle

\begin{abstract} \label{section:abstract}
Many applications of generative models rely on the marginalization of their high-dimensional output probability distributions. Normalization functions that yield sparse probability distributions can make exact marginalization more computationally tractable.
However, sparse normalization functions usually require alternative loss functions for training since the log-likelihood is undefined for sparse probability distributions. Furthermore, many sparse normalization functions often collapse the multimodality of distributions. 
In this work, we present $\emph{ev-softmax}$, a sparse normalization function that preserves the multimodality of probability distributions. We derive its properties, including its gradient in closed-form, and introduce a continuous family of approximations to $\emph{ev-softmax}$ that have full support and can be trained with probabilistic loss functions such as negative log-likelihood and Kullback-Leibler divergence.  
We evaluate our method on a variety of generative models, including variational autoencoders and auto-regressive architectures. Our method outperforms existing dense and sparse normalization techniques in distributional accuracy. We demonstrate that $\emph{ev-softmax}$ successfully reduces the dimensionality of probability distributions while maintaining multimodality.
\end{abstract}

\section{Introduction} \label{section:introduction}
Learning deep generative models over discrete probability spaces has enabled state-of-the-art performance across tasks in computer vision \cite{oord2017neural}, natural language processing \cite{vaswani2017attention,radford2018improving,yang2017improving}, and robotics \cite{salzmann2020trajectron++,co2018self,fortuin2018som}. The probability distributions of deep generative models are often obtained from a neural network using the \textit{softmax} transformation. Since the \textit{softmax} function has full support, the logarithm of the output probabilities is well-defined, which is important for common probabilistic loss functions such as the negative log likelihood, cross entropy, and Kullback-Leibler (KL) divergence.

Several applications highlight the importance of achieving sparsity in high dimensional dense distributions to make downstream tasks computationally feasible and perhaps even interpretable~\cite{martins_softmax_2016,correia_efficient_2020,itkina_evidential_2021,malaviya2018sparse}. In discrete variational autoencoders (VAEs), for instance, calculating the posterior probabilities exactly requires marginalization over all possible latent variable assignments, which is intractable for large latent spaces~\cite{correia_efficient_2020,itkina_evidential_2021}. Stochastic approaches to circumvent exact marginalization include the score function estimator~\cite{williams1992simple,glynn1990likelihood}, which suffers from high variance, and continuous relaxations of the discrete latent variable, such as Gumbel-Softmax~\cite{jang_categorical_2017,maddison2016concrete}, which introduce bias.

\begin{figure}[t]
    \vspace{-1.5em}
    \centering
\begin{tikzpicture}

\definecolor{color0}{rgb}{1,0.647058823529412,0}

\begin{groupplot}[group style={group size=1 by 1}]

\nextgroupplot[
hide x axis,
hide y axis,
legend cell align={center},
legend style={
    font=\small,
    anchor=west,legend columns=1,
    /tikz/every even column/.append style={column sep=0.25cm},
    row sep=1pt
}, 
legend cell align={center},
legend to name=legendgroupp,
tick align=outside,
tick pos=left,
x grid style={white!69.0196078431373!black},
xmin=-2.0, xmax=1.6,
xtick style={color=black},
y grid style={white!69.0196078431373!black},
ymin=-1.5, ymax=1.5,
ytick style={color=black}
]
\addplot [semithick, black, forget plot]
table {%
0 1
-0.866025403784439 -0.5
0.866025403784439 -0.5
0 1
};
\addplot [semithick, black, mark=*, mark size=6, mark options={solid}, only marks]
table {%
-1.90525588832576 0.1
};
\addlegendentry{Input}
\addplot [semithick, color0, mark=pentagon*, mark size=6, mark options={solid}, only marks]
table {%
-0.520779140882504 -0.126817031639413
};
\addlegendentry{Softmax}
\addplot [semithick, red, mark=triangle*, mark size=6, mark options={solid}, only marks]
table {%
-0.866025403784439 -0.5
};
\addlegendentry{Sparsemax}
\addplot [semithick, green!50!black, mark=square*, mark size=6, mark options={solid}, only marks]
table {%
-0.633115300748128 -0.0965878679450073
};
\addlegendentry{Ev-Softmax (Ours)}
\draw (axis cs:0,1) node[
  scale=1.0,
  anchor=south west,
  text=black,
  rotate=0.0
]{(1, 0, 0)};
\draw (axis cs:-0.866025403784439,-0.5) node[
  scale=1.0,
  anchor=north west,
  text=black,
  rotate=0.0
]{(0, 1, 0)};
\draw (axis cs:0.866025403784439,-0.5) node[
  scale=1.0,
  anchor=north west,
  text=black,
  rotate=0.0
]{(0, 0, 1)};
\coordinate (top) at (rel axis cs:0.8,0.75);
\coordinate (bot) at (rel axis cs:1.8,0.75);
\end{groupplot}

\path (top)--(bot) coordinate[midway] (group center);
\node[below=0em,inner sep=0pt] at(group center) {\pgfplotslegendfromname{legendgroupp}};

\end{tikzpicture}
    \definecolor{goldenrod}{rgb}{0.5,0.347058823529412,0}
    \definecolor{darkgreen}{rgb}{0,0.2,0.13}
    \caption{\small We evaluate the \textit{softmax}, \textit{sparsemax}, and \textit{ev-softmax} (ours) functions on an example input point $\textbf{v} \in \R^3$ and plot the resulting values on the plane $v_1 + v_2 + v_3 = 1$. We plot the points $\textbf{v} = (0.4, 1.4, -0.8)$, \textit{softmax}$(\textbf{v}) \approx (0.25, 0.67, 0.07)$, \textit{sparsemax}$(\textbf{v}) = (0, 1, 0)$, and \textit{ev-softmax}$(\textbf{v}) \approx (0.27, 0.73, 0)$. \textit{Sparsemax} yields a unimodal distribution, while \textit{ev-softmax} yields a bimodal distribution which is close to that of \textit{softmax}.} \vspace{-1.5em}
    \label{fig:example}
\end{figure}
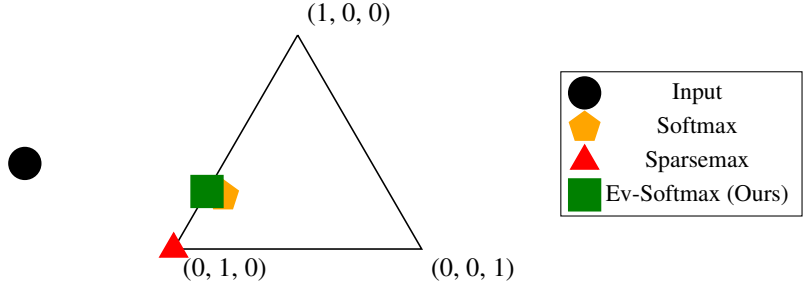

Alternatively, high-dimensional latent spaces can be tractably marginalized using normalization functions that produce sparse probability distributions. Several sparse alternatives to \textit{softmax} have been proposed, including \textit{sparsemax}~\cite{martins_softmax_2016}, $\alpha$-\textit{entmax}~\cite{peters_sparse_2019}, and \textit{sparsehourglass}~\cite{laha_controllable_nodate}. However, such approaches often collapse the multimodality of distributions, resulting in unimodal probability distributions~\cite{itkina_evidential_2021} (see~\cref{fig:example}). When marginalizing over discrete latent spaces, maintaining this multimodality is crucial for applications such as image generation, where the latent space is multimodal in nature~\cite{oord2017neural}, and machine translation, where multiple words can be valid translations and require context to distinguish between the optimal word choice~\cite{correia_efficient_2020}. Furthermore, these approaches either require the construction of alternative loss functions to the negative log-likelihood, which is undefined for zero-valued output probabilities~\cite{martins_softmax_2016,laha_controllable_nodate,niculae_sparsemap_2018,peters_sparse_2019}, or they are applied as a post-hoc transformation at test time~\cite{itkina_evidential_2021}. For the latter case, the latent distributions obtained through these existing sparse normalization functions and their resulting posteriors are not trained directly.

The post-hoc sparsification procedure introduced by \citet{itkina_evidential_2021} provides a method for obtaining sparse yet multimodal distributions at test time. We generalize this method to a sparse normalization function termed \textit{evidential softmax} (\emph{ev-softmax}) which can be applied during both training and test time. We define continuous approximations of this function that have full support and are thus compatible with typical loss terms, such as negative log-likelihood and KL divergence. We use this full-support approximation of \textit{ev-softmax} to directly optimize discrete generative models for objective functions with probabilistic interpretations. This approach is in contrast with post-hoc sparsification methods that optimize a proxy model \cite{itkina_evidential_2021} and methods that use non-probabilistic objective functions, such as \textit{sparsemax} \cite{martins_softmax_2016}. We evaluate these discrete generative models at test time with the \textit{ev-softmax} function to obtain sparse yet multimodal latent distributions.

\textbf{Contributions:} This paper proposes a strategy for training neural networks with sparse probability distributions that is compatible with negative log-likelihood and KL-divergence computations. We generalize the evidential sparsification procedure developed by \citet{itkina_evidential_2021} to the \textit{ev-softmax} function. We prove properties of \textit{ev-softmax} and its continuous approximation, which has full support. Our approach outperforms existing dense and sparse normalization functions in distributional accuracy across image generation and machine translation tasks.

\section{Background} \label{section:background}
\textbf{Notation:} We denote scalars, vectors, matrices, and sets as $a$, $\textbf{a}$, $\textbf{A}$, and $\mathcal{A}$, respectively. The vectorized Kronecker delta $\boldsymbol{\delta}_i$ is defined to be $1$ at the $i$th index and $0$ at all the other indices. The $K$-dimensional simplex is written as $\Delta^K := \{\boldsymbol{\xi} \in \R^{K + 1} : \mathbf{1}^T \boldsymbol{\xi} = 1, \xi_i \ge 0\}$. The KL divergence of probability distribution $\textbf{p}$ from distribution $\textbf{q}$ is denoted as $\kld{\textbf{p}}{\textbf{q}}$.

\subsection{Discrete Variational Autoencoders}
To model a data generating distribution, we consider data $\textbf{x} \in \R^d$, categorical input variable $\textbf{y} \in \mathcal{Y}$, categorical latent variable $\textbf{z} \in \mathcal{Z}$, and parameters $\theta \in \R^p$. In the VAE model, $\textbf{z}$ is sampled from a prior $p_{\theta}(\textbf{z})$, and we generate \textbf{x} conditioned on \textbf{z} as $p_\theta(\textbf{x} \mid \textbf{z})$. The conditional variational autoencoder (CVAE) \cite{sohn_learning_nodate} extends this framework by allowing for the generative process to be controlled by input $\textbf{y}$. The latent variable $\textbf{z}$ is thus conditioned on the input $\textbf{y}$ in the prior $p_{\theta}(\textbf{z} \mid \textbf{y})$.

The VAE estimates parameters $\phi$ and an auxiliary distribution $q_\phi(\textbf{z} \mid \textbf{x})$ to calculate the evidence lower bound (ELBO) of the log-likelihood for the observed data \cite{kingma2013auto}:
\begin{equation}
    \log p_\theta(\textbf{x}) \ge  \E_{q_\phi(\textbf{z} \mid \textbf{x})} \log p_\theta (\textbf{x}\mid\textbf{z}) - \kld{q_\phi (\textbf{z}\mid\textbf{x})}{p_\theta(\textbf{z})} \label{eqn:vae}
\end{equation}
The CVAE estimates parameters $\phi$ of an auxiliary distribution $q_\phi(\textbf{z} \mid \textbf{x}, \textbf{y})$ to obtain the ELBO~\cite{sohn_learning_nodate}:
\begin{equation}
    \log p_\theta(\textbf{x} \mid \textbf{y}) \ge \E_{q_\phi(\textbf{z}\mid\textbf{x}, \textbf{y})} \log p_\theta(\textbf{x} \mid \textbf{z}) - \kld{q_\phi(\textbf{z}\mid\textbf{x},\textbf{y})}{p_\theta(\textbf{z}\mid\textbf{y})}.\label{eqn:cvae}
\end{equation}
The subsequent discussion focuses on VAEs, and analogous expressions for CVAEs may be obtained by replacing $p_{\theta}(\textbf{x})$, $q_{\phi}(\textbf{z} \mid \textbf{x})$, and $p_{\theta}(\textbf{z})$ with $p_{\theta}(\textbf{x} \mid \textbf{y})$, $q_{\phi}(\textbf{z} \mid \textbf{x}, \textbf{y})$, and $p_{\theta}(\textbf{z} \mid \textbf{y})$, respectively.

Maximizing the VAE ELBO using gradient-based methods involves computing the term $\nabla_\phi \E_{q_\phi(\textbf{z}\mid\textbf{x})} \log p_\theta (\textbf{x} \mid \textbf{z})$ in \cref{eqn:vae}, which requires marginalization over the auxiliary distribution $q_\phi(\textbf{z} \mid \textbf{x})$. Many Monte Carlo approaches estimate this quantity without explicit marginalization over all possible latent classes $\textbf{z} \in \mathcal{Z}$. The score function estimator (SFE) is an unbiased estimator which uses the identity 
$\nabla_\phi \E_{q_\phi(\textbf{z} \mid \textbf{x})} \log p_\theta(\textbf{x} \mid \textbf{z}) = \E_{q_\phi(\textbf{z} \mid \textbf{x})} p_\theta(\textbf{x} \mid \textbf{z}) \nabla_\phi \log q_\phi(\textbf{z} \mid \textbf{x})$ to estimate the gradient through samples drawn from $q_\phi(\textbf{z} \mid \textbf{x})$. However, the SFE is prone to high variance, and in practice, sampling-based approaches generally require variance reduction methods such as baselines, control variates, and Rao-Blackwellization techniques~\cite{liu2019rao}. Alternatively, a low-variance gradient estimate can be obtained by the \textit{reparameterization trick}, which separates the parameters of the distribution from the source of stochasticity~\cite{kingma2013auto}. The Gumbel-Softmax and Straight-Through reparameterizations rely on a continuous relaxation of the discrete latent space~\cite{jang_categorical_2017,maddison2016concrete}. In contrast to these stochastic approaches, our method, like \textit{sparsemax}~\cite{martins_softmax_2016}, $\alpha$-\textit{entmax}~\cite{peters_sparse_2019}, and sparsehourglass \cite{laha_controllable_nodate}, focuses on obtaining sparse auxiliary distributions $q_\phi(\textbf{z} \mid \textbf{x})$ to enable exact marginalization over all latent classes $\textbf{z} \in \mathcal{Z}$ for which the auxiliary distribution has nonzero probability mass.

\subsection{Normalization Functions}
A common normalization function that maps vectors $\textbf{v} \in \R^K$ to probability distributions in $\Delta^{K-1}$ is the \emph{softmax} function:
\begin{equation}\label{eqn:softmax}
    \softmax(\textbf{v})_k \propto e^{v_k}.
\end{equation}
One limitation of this function is the resulting probability distribution always has full support, \textit{i.e.}~\softmax$(\textbf{v})_k \ne 0$ for all $\textbf{v}$ and $k$. Thus, if the distributions $q_\phi$ in \cref{eqn:vae} and \cref{eqn:cvae} are outputs of the \textit{softmax} function, then calculating the expectations over $q_\phi$ exactly requires marginalizing over all one-hot vectors $\textbf{z} \in \R^K$, which can be computationally intractable when $K$ is large \cite{correia_efficient_2020,itkina_evidential_2021}.

Recent interest in  sparse alternatives has led to the development of many new sparse normalization functions~\cite{martins_softmax_2016, itkina_evidential_2021,laha_controllable_nodate,niculae_sparsemap_2018,peters_sparse_2019}. The \textit{sparsemax} function projects inputs onto the simplex to achieve a sparse distribution~\cite{martins_softmax_2016}. The $\alpha$-\textit{entmax} family of functions generalizes the \textit{softmax} and \textit{sparsemax} functions  through a learnable parameter $\alpha$ which tunes the level of sparsity~\cite{peters_sparse_2019}. \textit{Sparsehourglass} is another generalization of \textit{sparsemax} with a controllable level of sparsity~\cite{laha_controllable_nodate}. Despite the controls, we find that these methods achieve sparsity at the expense of collapsing valid modes.

\subsection{Post-Hoc Evidential Sparsification}
\label{section:evidentialsparsification}
Of particular interest is a sparse normalization function that maintains multimodality in probability distributions. Balancing sparsity and multimodality has been shown to be beneficial for tasks such as image generation~\cite{itkina_evidential_2021} or machine translation~\cite{peters_sparse_2019}. \citet{itkina_evidential_2021} propose a post-hoc sparsification method for discrete latent spaces in CVAEs which preserves multimodality in distributions. The method calculates evidential weights as affine transformations of an input feature vector and interprets these weights as evidence either for a singleton latent class $\{z_k\}$ or its complement $\overline{\{z_k\}}$ depending on the sign of the weights. This interpretation, with origins in evidential theory~\cite{dst, denoeux2019logistic}, allows for the distinction between conflicting evidence and lack of evidence, which is normally not possible with ordinary multinomial logistic regression. Conflicting evidence can be interpreted as multimodality whereas lack of evidence can correspond to sparsity~\cite{itkina_evidential_2021}.

For a given dataset and model, the resulting post-hoc sparsification is as follows. A CVAE is trained on a dataset using the \textit{softmax} normalization and the ELBO in \cref{eqn:cvae}. At test time, for a given query~$\textbf{y}$ from the dataset, the learned weights of the last layer of the encoder $\bbhat \in \R^{(J+1) \times K}$, and the corresponding output of the last hidden layer $\boldsymbol{\phi}(\textbf{y}) \in \R^J$, the evidential weights are:
\begin{align}
    w_{k} &= \bhat_{0k} - \frac{1}{K} \sum_{\ell = 1}^K \bhat_{0\ell} + \sum_{j = 1}^J \left[ \phi_j \left(\bhat_{jk} - \frac{1}{K} \sum_{\ell = 1}^K \bhat_{j\ell}\right) \right] \label{eqn:wk}
\end{align}
for each $k \in \{1, \ldots, K\}$. Evidence in support of a latent class $z_{k}$ then corresponds to $w_k^+~=~\max(0,~w_k)$ and evidence against it as $w_k^- = \max(0, -w_k)$. The resulting sparse distribution is:
\begin{align}
    p(z_k \mid \textbf{y}) &\propto \ii\{m_k \ne 0\} e^{w_k}, \label{eqn:zk}
\end{align}
where the evidential belief mass $m_k$ is:
\begin{equation}
    m_k = e^{-w_k^-} \left( e^{w_k^+} - 1 + \prod_{\ell \ne k} \left(1 - e^{-w_\ell^-}\right)\right) \label{eqn:mk}
\end{equation}
for each $k \in \{1, \ldots, K\}$. Further information can be found in \cref{section:evidential_sparsification_appendix}.
\section{The Evidential Softmax Transformation} \label{section:evidential_softmax}
Although the evidential sparsification procedure is successful at providing sparse yet multimodal normalizations of scores, it can only be performed post-hoc, without the possibility of being used at training time. Furthermore, it requires $O(JK^2)$ computations to calculate all the weights for each input, with $J$ hidden features in the last layer and $K$ latent classes.

We propose a transformation, which we call \emph{evidential softmax} (\textit{ev-softmax}), on inputs $\textbf{v} \in \R^K$. We claim that this transformation is equivalent to that of the \textit{post-hoc evidential sparsification} procedure with the advantages of having a simple closed form and requiring only $O(K)$ computations.

Let $\overline{v} = \frac{1}{K} \sum_{i=1}^K v_i$ be the arithmetic mean of the elements of $\textbf{v}$. We define our transformation as,
\begin{equation}
    \label{eqn:evidentialsoftmax}
    \evidentialsoftmax(\textbf{v})_k \propto \ii\{v_k \ge \overline{v} \} e^{v_k}.
\end{equation}
Note that the \textit{post-hoc sparsification} procedure in \cref{section:evidentialsparsification} takes input vectors $\boldsymbol{\phi} \in \R^J$ and the weights $\bhat \in \R^{(J + 1) \times K}$ of the neural network's last layer whereas the \textit{ev-softmax} function takes as input $\textbf{v} = \bhat^T \bphi \in \R^K$. We formalize the equivalence of \textit{ev-softmax} with the \textit{post-hoc sparsification} procedure as follows:

\begin{theorem}
\label{thm}
For any input vector $\bphi \in \R^J$, weight matrix $\bhat \in \R^{J \times K}$, define the score vector $\textbf{v} = \bhat^T \bphi$ and variables $w_k$,  $w_k^{\pm}$, $m_k$, and $p(z_k)$ as in \cref{section:evidentialsparsification} for all $k \in \{1, \ldots, K\}$. If the probability distribution of $\textbf{v}$ is non-atomic, then $\evidentialsoftmax(\textbf{v})_k = p(z_k)$ for each $k$.
\end{theorem}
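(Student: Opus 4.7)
The strategy is to reduce the statement to a direct algebraic computation. The key observation is that the evidential weight $w_k$ equals the mean-centered score $v_k - \overline{v}$; once this is established, I can show that the support indicator $\ii\{m_k \ne 0\}$ collapses almost surely to $\ii\{v_k \ge \overline{v}\}$ by a short case analysis, after which the equality of the unnormalized masses is a one-line computation.

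First, I would verify the identity $w_k = v_k - \overline{v}$ by direct algebra from \cref{eqn:wk} and the formula $v_k = \sum_j \bhat_{jk}\phi_j$ (absorbing a bias row $\bhat_{0k}$ if present). The two ``mean of weights'' terms subtracted in \cref{eqn:wk} reassemble exactly into $\overline{v} = \tfrac{1}{K}\sum_\ell v_\ell$. An immediate consequence is $\sum_k w_k = 0$, so the $w_k$ cannot all share a sign; and by the non-atomicity hypothesis on $\textbf{v}$, $v_k \ne \overline{v}$ for every $k$ with probability one, so each $w_k$ is almost surely either strictly positive or strictly negative.

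Next, I would analyze when $m_k = 0$ by splitting on the sign of $w_k$. Since $e^{-w_k^-}>0$, \cref{eqn:mk} gives $m_k = 0$ iff
\[
e^{w_k^+} - 1 + \prod_{\ell \ne k}\bigl(1 - e^{-w_\ell^-}\bigr) = 0.
\]
If $w_k > 0$, then $e^{w_k^+} - 1 > 0$ and the product is nonnegative, so $m_k > 0$. If $w_k < 0$, then $e^{w_k^+} - 1 = 0$, and the product vanishes iff some factor $1 - e^{-w_\ell^-}$ does, i.e.\ iff some other $w_\ell \ge 0$; since $\sum_k w_k = 0$ and $w_k<0$ force $\sum_{\ell\ne k}w_\ell = -w_k > 0$, at least one such $w_\ell$ exists, so $m_k=0$. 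Thus almost surely $\ii\{m_k \ne 0\} = \ii\{w_k > 0\} = \ii\{v_k \ge \overline{v}\}$.

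Finally, on this common support the unnormalized mass assigned by \cref{eqn:zk} is $e^{w_k} = e^{-\overline{v}}\,e^{v_k} \propto e^{v_k}$, which is exactly the unnormalized mass assigned by $\evidentialsoftmax(\textbf{v})_k$ in \cref{eqn:evidentialsoftmax}. Since both sides are probability distributions with identical support and proportional unnormalized masses, their normalizing constants coincide, yielding $p(z_k) = \evidentialsoftmax(\textbf{v})_k$ for every $k$. The only genuinely delicate point, which I expect to be the main obstacle in writing the proof cleanly, is the boundary $v_k = \overline{v}$ (equivalently $w_k = 0$), where $m_k$ can vanish while $\ii\{v_k \ge \overline{v}\} = 1$; the non-atomicity assumption in the theorem is precisely what rules this measure-zero situation out.
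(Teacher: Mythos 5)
Your proposal is correct and follows essentially the same route as the paper: establish $w_k = v_k - \overline{v}$ by algebra on \cref{eqn:wk}, characterize $m_k = 0$ as the conjunction of $w_k \le 0$ with the existence of some $\ell \ne k$ having $w_\ell \ge 0$ (which you get from $\sum_k w_k = 0$ and the paper gets from the equivalent mean condition), and invoke non-atomicity to identify $\ii\{w_k>0\}$ with $\ii\{v_k \ge \overline{v}\}$ almost surely before matching the unnormalized masses $e^{w_k} \propto e^{v_k}$. Your closing remark about the boundary $v_k = \overline{v}$ being the delicate point is exactly the role the paper assigns to its \cref{lemma:equivalent_sparse_condition_2}.
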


The proof of this theorem uses the following two lemmas:

\begin{lemma}
\label{lemma1}
Let $\overline{v} = \frac{1}{K} \sum_{i=1}^K v_i$. Then $v_k - \overline{v} = w_k$ for all $k \in \{1, \ldots, K\}$.
\begin{proof}
The result follows from algebraic manipulation of \cref{eqn:wk}. Full details are in \cref{section:derivation}.
\end{proof}
\end{lemma}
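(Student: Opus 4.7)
The plan is to prove \cref{lemma1} by a direct algebraic computation: unpack the definition $\textbf{v} = \bhat^T \bphi$ on both sides of the desired equality and match it term by term with the defining expression \cref{eqn:wk} for $w_k$.

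First I would fix an index $k \in \{1, \ldots, K\}$ and expand
\begin{equation*}
v_k \;=\; \bigl(\bhat^T \bphi\bigr)_k \;=\; \bhat_{0k} + \sum_{j=1}^J \phi_j\, \bhat_{jk},
\end{equation*}
where the $\bhat_{0k}$ term captures the bias row of the last layer (the weight matrix in \cref{section:evidentialsparsification} is in $\R^{(J+1)\times K}$, and the bias row simply becomes part of the score via a constant $\phi_0 \equiv 1$). Next I would compute the mean by interchanging the two sums:
\begin{equation*}
\overline{v} \;=\; \frac{1}{K}\sum_{\ell=1}^K v_\ell \;=\; \frac{1}{K}\sum_{\ell=1}^K \bhat_{0\ell} \;+\; \sum_{j=1}^J \phi_j\,\frac{1}{K}\sum_{\ell=1}^K \bhat_{j\ell}.
\end{equation*}

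Subtracting then gives
\begin{equation*}
v_k - \overline{v} \;=\; \Bigl(\bhat_{0k} - \tfrac{1}{K}\sum_{\ell=1}^K \bhat_{0\ell}\Bigr) \;+\; \sum_{j=1}^J \phi_j\,\Bigl(\bhat_{jk} - \tfrac{1}{K}\sum_{\ell=1}^K \bhat_{j\ell}\Bigr),
\end{equation*}
which matches the right-hand side of \cref{eqn:wk} exactly, yielding $v_k - \overline{v} = w_k$. Since this holds for any fixed $k$, the claim follows.

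There is no real obstacle here; the argument is a rearrangement of the defining expression for $w_k$ using the linearity of the mean. The only subtlety to flag is the treatment of the bias: the theorem statement writes the weights as $\bhat \in \R^{J\times K}$, whereas \cref{section:evidentialsparsification} treats $\bhat \in \R^{(J+1)\times K}$ with an explicit bias row. I would briefly note that the two conventions coincide under the standard trick of appending a constant $\phi_0 \equiv 1$ to the input feature vector, so that the $\bhat_{0k}$ terms arise naturally in the expansion of $v_k$ and cancel correctly in $v_k - \overline{v}$.
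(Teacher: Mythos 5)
Your computation is correct and is the same algebraic manipulation the paper carries out in \cref{section:derivation} (there written starting from the centered weights $\alpha_k,\beta_{jk}$ and expanding to $v_k-\overline{v}$, rather than the reverse direction you take); your remark about folding the bias row into the score via $\phi_0\equiv 1$ matches how the appendix handles $\bahat$ explicitly.
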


\begin{lemma}
\label{lemma2}
For $m_k$ as defined in \cref{eqn:mk}, $\ii\{m_k \ne 0\} = \ii\{v_k > \overline{v}\}$ for all $k \in \{1, \ldots, K\}$.
\begin{proof}
Observe that the condition $\ii\{m_k = 0\} = 1 - \ii\{m_k \ne 0\}$ is equivalent to the conjunction of the following two conditions: 1) $w_k \le 0$ and 2) there exists some $\ell \ne k$ such that $w_\ell \ge 0$. From \cref{lemma1}, these statements are equivalent to 1) $v_k \le \overline{v}$ and 2) there exists some $\ell \ne k$ such that $v_\ell \ge \overline{v}$. Note that if $v_k \le \overline{v}$, then either $v_\ell = \overline{v}$ or there must exist some $\ell \ne k$ such that $v_\ell > \overline{v}$. In both cases, there exists $\ell \ne k$ such that $v_\ell \ge \overline{v}$. Thus, we see that $v_k \le \overline{v}$ implies there exists some $\ell \ne k$ such that $v_\ell \ge \overline{v}$. We conclude that $\ii\{m_k = 0\} = \ii\{v_k \le \overline{v}\}$ and the result follows.
\end{proof}
\end{lemma}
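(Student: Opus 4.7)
The plan is to analyze when the bracket in the definition of $m_k$ vanishes. Since $e^{-w_k^-} > 0$ always, we have $m_k = 0$ if and only if $e^{w_k^+} - 1 + \prod_{\ell \ne k} (1 - e^{-w_\ell^-}) = 0$. I would then observe that both summands in the bracket are nonnegative: the term $e^{w_k^+} - 1 \ge 0$ with equality iff $w_k^+ = 0$, which means $w_k \le 0$, and each factor $1 - e^{-w_\ell^-} \ge 0$ with equality iff $w_\ell^- = 0$, i.e.\ $w_\ell \ge 0$. So the bracket vanishes iff both summands vanish: $w_k \le 0$ and there exists some $\ell \ne k$ with $w_\ell \ge 0$.

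Next I would apply \cref{lemma1} to translate these conditions in $w$ into conditions in $v$: $m_k = 0$ iff $v_k \le \overline{v}$ and there exists $\ell \ne k$ with $v_\ell \ge \overline{v}$. The remaining step is to notice that the second condition is redundant given the first. This follows from the definition of the mean: since $\sum_i v_i = K\overline{v}$, if $v_k \le \overline{v}$ then $\sum_{\ell \ne k} v_\ell \ge (K-1)\overline{v}$, which forces at least one $v_\ell$ with $\ell \ne k$ to satisfy $v_\ell \ge \overline{v}$.

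Combining these observations gives $\ii\{m_k = 0\} = \ii\{v_k \le \overline{v}\}$, which is precisely the contrapositive of the claim $\ii\{m_k \ne 0\} = \ii\{v_k > \overline{v}\}$. The main (and only) delicate step is the redundancy argument in the previous paragraph, namely checking the boundary case $v_k = \overline{v}$: here we need to confirm that some other entry still lies weakly above the mean, which it does (either trivially, because all other entries also equal $\overline{v}$, or because some must exceed $\overline{v}$ to compensate for an entry below). Everything else is a direct factor-by-factor sign analysis of the formula for $m_k$ combined with \cref{lemma1}.
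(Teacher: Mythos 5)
Your proof is correct and follows essentially the same route as the paper's: reduce $m_k = 0$ to the conjunction of $w_k \le 0$ and the existence of $\ell \ne k$ with $w_\ell \ge 0$, translate via \cref{lemma1}, and observe that the second condition is implied by the first via the mean. The only difference is that you explicitly derive the first equivalence by a sign analysis of the two nonnegative summands in the bracket of \cref{eqn:mk}, whereas the paper simply asserts it (relying on the cited prior work); your derivation is correct and arguably more self-contained.
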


We briefly sketch the proof of Theorem \ref{thm} and refer the reader to \cref{section:derivation} for the full derivation. Lemma $\ref{lemma1}$ implies that $e^{w_k} \propto e^{v_k}$. In addition, combining Lemma $\ref{lemma2}$ with the non-atomicity of the distribution of $\textbf{v}$ yields that $\ii\{m_k \ne 0\} = \ii\{v_k \ge \overline{v}\}$ (\cref{lemma:equivalent_sparse_condition_2}). \cref{eqn:evidentialsoftmax} follows from these two results.
\subsection{Properties}
\cref{table:properties} gives a list of \textit{softmax} alternatives, desirable properties of a normalization function, and which properties are satisfied by each function as inspired by the analysis of~\citet{laha_controllable_nodate}. Refer to \cref{section:properties} for proofs of these properties. These results show that \textit{ev-softmax} satisfies the same properties as the \textit{softmax} transformation. While idempotence can be useful, these normalizations are generally only applied once, and in this regime the idempotence does not provide any further guarantees. Likewise, scale invariance is less applicable in practice because the scale of the inputs to normalization functions is often one of the learned features of data, and the scale generally provides a measure of confidence in the predictions.

The Jacobians of \textit{softmax} and \textit{ev-softmax} have analogous forms (see \cref{section:properties}):
\begin{align}
    \frac{\partial \softmax(\textbf{v})_i}{\partial v_j} &= \softmax(\textbf{v})_i (\delta_{ij} - \softmax(\textbf{v})_j) \\
    \frac{\partial \evidentialsoftmax(\textbf{v})_i}{\partial v_j} &= \evidentialsoftmax(\textbf{v})_i (\delta_{ij} - \evidentialsoftmax(\textbf{v})_j).
\end{align}
For the Jacobian of the proposed \textit{ev-softmax} function, there are two cases to consider. First, when either or both of $\evidentialsoftmax(\textbf{v})_i$ and $\evidentialsoftmax(\textbf{v})_j$ are exactly 0, we have $\frac{\partial \evidentialsoftmax(\textbf{v})_i}{\partial v_j} = 0$. Intuitively, this occurs because classes that have a probability of $0$ do not contribute to the gradient and also remain $0$ under small local changes in the input. Second, when both $\evidentialsoftmax(\textbf{v})_i$ and $\evidentialsoftmax(\textbf{v})_j$ are nonzero, the partial derivative is equivalent to that of \textit{softmax} over the classes that have nonzero probabilities. Hence, optimizing models using the \textit{ev-softmax} function should yield similar behaviors to those using the \textit{softmax} function.
\begin{table}[t]
    \centering
    \caption{\small Summary of properties satisfied by normalization functions. Here, $\checkmark$ denotes that the property is satisfied. The $*$ denotes that the property is satisfied conditional on either the input or some parameter that is independent of the input. Other properties satisfied by all considered normalization functions include invariance with respect to permutation of the coordinates and existence of the Jacobian. These properties are proven in \cref{section:properties}. \\}
    \label{table:properties}
    \begin{tabular}{@{}lcccc@{}}
        \toprule
        Property                & Softmax      & Sparsemax \cite{martins_softmax_2016}   & Sparsehourglass \cite{laha_controllable_nodate}   & \textbf{Ev-Softmax (Ours)} \\
        \midrule
        Idempotence             &              & $\checkmark$ & $\checkmark$       &                             \\
        Monotonic               & $\checkmark$ & $\checkmark$ & $\checkmark$       & $\checkmark$                \\
        Translation Invariance  & $\checkmark$ & $\checkmark$ & $\checkmark^*$     & $\checkmark$                \\
        Scale Invariance        &              &              & $\checkmark^*$     &                             \\
        Full Domain             & $\checkmark$ & $\checkmark$ & $\checkmark$       & $\checkmark$                \\
        Lipschitz               & 1            & 1            & $1 + \frac{1}{Kq}$ & $1^*$                       \\
        \bottomrule
    \end{tabular}
\end{table}

One notable connection between \textit{ev-softmax} and \textit{sparsemax} is that the supports of both functions can be expressed in terms of mean conditions of their inputs. For an input $\textbf{v} \in \R^K$, the $i$th class is given zero probability by \textit{ev-softmax} if $v_i < \frac{1}{K} \sum_{k=1}^K v_k$. Analogously, denoting the size of the support of \textit{sparsemax} on $\textbf{v}$ as $\ell$, the $i$th class is given zero probability by \textit{sparsemax} if $v_i \le -\frac{1}{\ell} + \frac{1}{\ell} \sum_{k=1}^{\ell} v_{(k)}$, where $v_{(k)}$ is the $k$th largest element in $\textbf{v}$~\cite{martins_softmax_2016}. Thus, sparsemax considers the arithmetic mean after sparsification, whereas ev-softmax considers the arithmetic mean before sparsification.

\Cref{fig:example} presents a visualization of an input point in $\R^3$ and the results upon applying \textit{softmax}, \textit{sparsemax}, and \textit{ev-softmax} to this point. Informally, the interior of the simplex is the set of trimodal distributions, the edges constitute the set of bimodal distributions, and the vertices are the unimodal distributions. Thus, this input provides an example of when \textit{sparsemax} yields a unimodal distribution whereas \textit{ev-softmax} does not. Note that $\R^3$ is the lowest-dimensional space for which the \textit{ev-softmax} function is nontrivial. In $\R^2$, \textit{ev-softmax} is equivalent to the \textit{argmax} function, which is piece-wise constant and has zero gradients everywhere. However, the focus of \textit{ev-softmax} is on balancing multimodality with sparsity for spaces with higher dimension than $\R^2$.

\subsection{Training-time Modification}
Since \textit{ev-softmax} outputs a sparse probability distribution, the negative log-likelihood and KL divergence cannot be directly computed. To address this limitation, we modify \textit{ev-softmax} during training time as follows:
\begin{equation}
    \label{eqn:evidentialsoftmaxtrain}
    \evidentialsoftmax_{\text{train}, \epsilon}(\textbf{v})_i \propto (\ii\{v_i \ge \overline{v} \} + \epsilon) e^{v_k}
\end{equation}
for some small $\epsilon$. In practice, we set $\epsilon$ to $10^{-6}$.

The KL divergence with the above training modification takes the same form as the $\epsilon$-KL used for overcoming sparsity in language modeling \cite{goos_using_2003,hullman_contextifier_2013}. As $\epsilon$ approaches $0$, the gradient of the negative log likelihood loss term approaches a form that mirrors that of the \textit{softmax} function:
\begin{align}
    \nabla_\textbf{v} \log \left[ \softmax(\textbf{v})_i \right] &= \boldsymbol{\delta}_i - \softmax(\textbf{v}) \\
    \lim_{\epsilon \to 0} \nabla_\textbf{v} \log \left[ \evidentialsoftmax_{\text{train}, \epsilon}(\textbf{v})_i \right] &= \boldsymbol{\delta}_i - \evidentialsoftmax(\textbf{v}), \label{eqn:gradient}
\end{align}
for any finite-dimensional, real-valued vector $\textbf{v}$. A derivation of Eq. \ref{eqn:gradient} is given in \cref{section:gradient}.

While the Gumbel-Softmax relaxation also enables gradient computations through discrete spaces, \cref{eqn:gradient} establishes the convergence and closed-form solution of the gradient as the relaxation goes to $0$, which applies uniquely to the \textit{ev-softmax} function. Note that the limit of the log-likelihood of a distribution generated by the training-time modification (i.e. $\lim_{\epsilon \to 0} \log[\evidentialsoftmax_{\text{train}, \epsilon}(\textbf{v})_i]$) is undefined, but \cref{eqn:gradient} shows that the limit of the \textit{gradient} of the log-likelihood is defined.

\section{Experiments} \label{section:experiments}
We demonstrate the applicability of our proposed strategies on the tasks of image generation and machine translation.\footnote{Code for the experiments is available at \url{https://github.com/sisl/EvSoftmax}.} We compare our method to the \emph{softmax} function as well as sparse normalization functions \emph{sparsemax}~\cite{martins_softmax_2016} and \emph{entmax}-$1.5$~\cite{peters_sparse_2019}. We also compare with the \textit{post-hoc evidential sparsification} procedure~\cite{itkina_evidential_2021}. \emph{Sparsehourglass} is not used as it is similar to $\alpha$-\emph{entmax}. Experiments indicate that \emph{ev-softmax} achieves higher distributional accuracy than \emph{softmax}, \emph{sparsemax}, \emph{entmax}-$1.5$, and \textit{post-hoc evidential sparsification} by better balancing the objectives of sparsity and multimodality.

\subsection{Conditional Variational Autoencoder (CVAE)}
To gain intuition for the \textit{ev-softmax} function, we consider the multimodal task of generating digit images for \textit{even} and \textit{odd} queries $(\textbf{y} \in \{\text{even}, \text{odd}\})$ on MNIST. We use a CVAE with $|\mathcal{Z}| = 10$ latent classes. Since the true conditional prior distribution given $\textbf{y} \in \{\text{even}, \text{odd}\}$ is uniform over 5 digits, this task benefits from both sparsity and multimodality of a normalization function.

\begin{figure}[t!]
    \centering
    \input{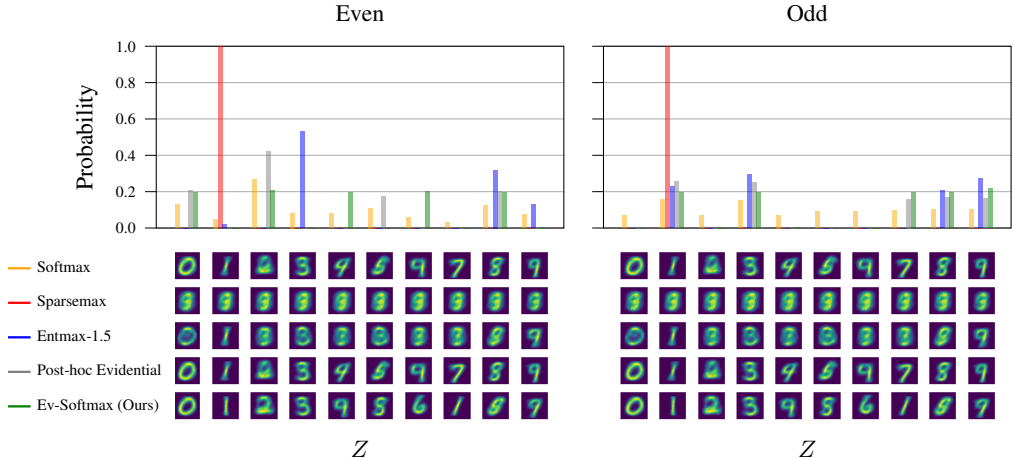}
    \caption{\small Using the proposed normalization function \textit{ev-softmax} at training time yields a more accurate distribution on the MNIST dataset than \textit{softmax}, \textit{sparsemax}, \textit{entmax}-$1.5$, and \textit{post-hoc evidential} baselines. The horizontal axis shows the decoded image for each latent class and the vertical axis shows the probability mass. Note that the latent space for the \textit{post-hoc evidential} method is the same as that of \textit{softmax}, since the method is applied only at test time to the latent space distribution. Unlike \textit{ev-softmax}, the sparse distributions produced by \textit{sparsemax} and \textit{entmax}-1.5 collapse the latent space.} \vspace{-1.5em}
    \label{fig:digits}
\end{figure}
\begin{figure}[t!]
    \centering
    \input{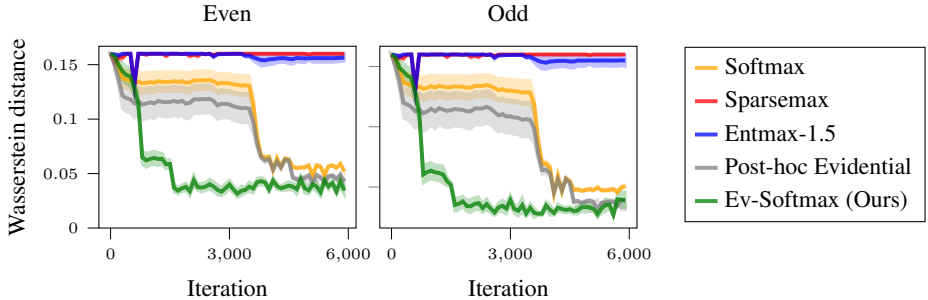}
    \caption{\small Wasserstein distance between the predicted categorical prior and the ground truth prior distributions. 
    Our method outperforms \textit{softmax}, \textit{sparsemax}, and \textit{entmax}-$1.5$ over 10 random seeds. Lower is better.} \vspace{-1.5em}
    \label{fig:wasserstein}
\end{figure}

\textbf{Model:} We follow the CVAE architecture and training procedure of~\citet{itkina_evidential_2021} for this task. For each of the four normalization functions we consider (\emph{ev-softmax}, \emph{sparsemax}, \emph{entmax}-$1.5$, and \emph{softmax}), we train an encoder, which consists of two multi-layer perceptrons (MLPs), and a decoder. The first MLP of the encoder receives an input query $\textbf{y}$ and outputs a categorical prior distribution $p_\theta(\textbf{z} \mid \textbf{y})$ over the latent variable $\textbf{z}$ generated by the normalization function. The second MLP takes as input a feature vector $\textbf{x}$ in addition to the query $\textbf{y}$, and outputs the probability distribution for the posterior $q_\psi(\textbf{z} \mid \textbf{x}, \textbf{y})$, generated using the same normalization function. The decoder consists of a single MLP that takes as input the latent variable $\textbf{z}$ and generates an output image $\textbf{x}$. Each model was trained for 20 epochs to minimize the ELBO of \cref{eqn:cvae}. Further details can be found in \cref{section:experimental_details}.

\textbf{Results and discussion:} In \cref{fig:digits}, we show the decoded images of the latent classes trained using each of the normalization functions, as well as the corresponding prior distributions over these latent classes for even and odd inputs. Using \textit{ev-softmax} at training time yields the most sensible images for both even and odd inputs. The CVAE trained with \textit{ev-softmax} learns a distribution with five nonzero probability modes for both even and odd inputs. All but one of the digits are represented by the latent classes of the \textit{ev-softmax} distribution, and they are generated by the even and odd priors almost perfectly. In contrast, the \textit{softmax} distribution results in nonzero probability mass assigned to all latent classes for both inputs, producing a distributional accuracy that is qualitatively worse than that of \textit{ev-softmax}. While the \textit{post-hoc evidential} distribution also has multiple nonzero probability modes for both even and odd inputs, the distribution is less accurate, demonstrating the advantage of \textit{ev-softmax} in training using the sparse latent distribution. The decoded images of \textit{entmax}-$1.5$ show that \textit{entmax}-$1.5$ reduces the posterior latent space to fewer than ten classes, decreasing the representational power of the generative model. Over 10 random seeds and a hyperparameter grid-search, \textit{sparsemax} collapsed both the prior and posterior distributions to a single mode.

To quantify the distributional accuracy of each CVAE model, we compute the Wasserstein distance (\cref{section:experimental_details}) between the model's predicted categorical prior $p(\textbf{z} \mid \textbf{y})$ and the ground truth prior, shown in \cref{fig:wasserstein}. We see that \textit{ev-softmax} converges the fastest and obtains the lowest distance from the ground truth distribution. \textit{Sparsemax} and \textit{entmax}-$1.5$, on the other hand, do not allow the model to converge toward the true prior as they collapse the prior and posterior distributions. The model trained with \textit{softmax} converges more slowly and to a higher distance from the ground truth distribution because it lacks the sparsity that \textit{ev-softmax} offers, resulting in worse distributional accuracy. The \textit{post-hoc evidential} procedure yields the second lowest Wasserstein distance, but it converges more slowly than \textit{ev-softmax}.

\subsection{Semi-supervised Variational Autoencoder}
To evaluate our method in a semi-supervised setting, we consider the semi-supervised VAE of \citet{NIPS2014_d523773c}. We evaluate the classification performance of the model on the MNIST dataset, using 10\% of the labeled data and treating the remaining data as unlabeled. In this regime, we expect that sparsity remains a key ingredient since we intuitively associate each digit with one latent class, while multimodality reflects uncertainty in the generative model.

\textbf{Model:} We follow the architecture and training procedure of~\citet{correia_efficient_2020}. We model the joint probability of an MNIST image $\textbf{x} \in \R^d$, a continuous latent variable $\textbf{h} \in \R^n$, and a discrete latent variable $\textbf{z} \in \{\boldsymbol{\delta_1}, \ldots, \boldsymbol{\delta_{10}}\}$ as $p_{\theta}(\textbf{x}) = p_{\theta}(\textbf{x} \mid \textbf{z}, \textbf{h}) p(\textbf{z}) p(\textbf{h})$. We assume $\textbf{h}$ has an $n$-dimensional standard Gaussian prior and $\textbf{z}$ a uniform prior.

We compare \emph{ev-softmax} to other sparse normalization functions, including \emph{sparsemax} and \emph{entmax}-1.5. We also train a model using \textit{softmax} and apply \textit{post-hoc evidential} sparsification at test time. Since we follow the procedure of~\citet{correia_efficient_2020}, we additionally report the results they obtained for stochastic methods. Unbiased gradient estimators that they report include SFE with a moving average baseline; SFE with a self-critic baseline (SFE+)~\cite{rennie2017self}; neural variational inference and learning (NVIL) with a learned baseline~\cite{mnih2014neural}; and sum-and-sample, a Rao-Blackwellized version of SFE~\cite{liu2019rao}. They also evaluate biased gradient estimators including Gumbel-Softmax and straight-through Gumbel-Softmax (ST Gumbel)~\cite{jang_categorical_2017,maddison2016concrete}. Each model was trained for 200 epochs.

\begin{wraptable}[20]{r}{0.58\textwidth} 
    \vspace{-1em}
    \caption{\small The results of the semi-supervised VAE on MNIST show that \textit{ev-softmax} yields the best semi-supervised performance. We report the mean and standard error of the accuracy and number of decoder calls for 10 random seeds.}
    \centering
    \begin{tabular}{lrr} \toprule
    Method & Accuracy (\%) & Decoder calls \\ \midrule
   \textit{Monte Carlo} \\
    SFE & 94.75 $\pm$ .002 & 1 \\
    SFE+ & 96.53 $\pm$ .001 & 2 \\
    NVIL & 96.01 $\pm$ .002 & 1 \\
    Sum\&Sample & 96.73 $\pm$ .001 & 2 \\
    Gumbel & 95.46 $\pm$ .001 & 1 \\
    ST Gumbel & 86.35 $\pm$ .006 & 1 \\
    \\
    \textit{Marginalization} \\
    Softmax & 96.93 $\pm$ .001 & 10 \\
    Sparsemax & 96.87 $\pm$ .001 & 1.01 $\pm$ 0.01 \\
    Entmax-1.5 & 97.20 $\pm$ .001 & 1.01 $\pm$ 0.01 \\
    Post-hoc evidential & 96.90 $\pm$ .001 & 1.72 $\pm$ 0.05 \\
    \textbf{Ev-softmax} & \textbf{97.30 $\pm$ .001} & 1.64 $\pm$ 0.01 \\ \bottomrule
    \end{tabular}
    \label{tab:semisupervised}
\end{wraptable}

\textbf{Results and discussion:}
\cref{tab:semisupervised} shows that using \textit{ev-softmax} yields the highest classification accuracy of models using either Monte Carlo methods or marginalization. Compared to \textit{sparsemax} and \textit{entmax}-$1.5$, \textit{ev-softmax} maintains higher multimodality in the output distribution, reducing the latent space by $84\%$ on average. In particular, both \textit{sparsemax} and \textit{entmax}-$1.5$ reduce the latent space by almost 90\%, but at the cost of decreased accuracy. On the other hand, \textit{post-hoc evidential sparsification} preserves more latent classes than \textit{ev-softmax} but with a higher variance, and it also achieves a lower accuracy. This suggests that in this semi-supervised setting, training with sparsity and multimodality in the output distribution are both important for distributional accuracy, which benefits overall performance.

\subsection{Vector-Quantized Variational Autoencoder (VQ-VAE)}
\begin{wraptable}[14]{r}{0.6\textwidth} 
\vspace{-1em}
    \caption{\small We sample 25 images from each of the 200 latent classes and compute downstream classification performance (top-5 accuracy and top-10 accuracy) and average number of nonzero latent classes. Our method yields images with the highest top-5 and top-10 accuracy scores. $K$ indicates the average number of nonzero latent classes.}
    \centering
    \begin{tabular}{@{}lrrr@{}}
    \toprule
    Method              & Acc-5 & Acc-10 & $K$ \\ \midrule
    Softmax             & 38.4 $\pm$ 0.2 & 48.8 $\pm$ 0.3 & 512 \\
    Sparsemax           & 40.0 $\pm$ 0.2 & 52.0 $\pm$ 0.2 & 46 $\pm$ 0.3 \\
    Entmax-1.5          & 38.4 $\pm$ 0.3 & 49.2 $\pm$ 0.1 & 90 $\pm$ 0.3 \\
    Post-hoc evidential & 38.2 $\pm$ 0.2 & 48.4 $\pm$ 0.3 & 102 $\pm$ 0.2 \\
    \textbf{Ev-softmax} & \textbf{43.6 $\pm$ 0.3} & \textbf{55.6 $\pm$ 0.2} & 77 $\pm$ 0.2 \\
    \bottomrule
    \end{tabular}
    \label{tab:vqvae}
\end{wraptable}

We demonstrate the performance of our approach on a much larger latent space by training a Vector Quantized-Variational Autoencoder (VQ-VAE) model on \textit{tiny}ImageNet~\cite{le2015tiny} for image generation. The VQ-VAE learns discrete latent variables embedded in continuous vector spaces~\cite{oord2017neural}. We use the \textit{tiny}ImageNet dataset due to computational constraints.

\textbf{Model:}
We train the VQ-VAE architecture in two stages. First the VQ-VAE model is trained assuming a uniform prior over the discrete latent space. For each normalization method, we train an autoregressive prior over the latent space from which we sample. We consider a latent space of $16 \times 16$ discrete latent variables with $K = 512$ classes each. We use the same PixelCNN~\cite{oord2014pixelcnn} as \citet{itkina_evidential_2021} for the autoregressive prior. For each normalization function, we train a PixelCNN prior using that function to generate the output probability over the latent space. To evaluate the accuracy of our VQ-VAE models, we train a Wide Residual Network classifier~\cite{zagoruyko2016wrn} on \textit{tiny}ImageNet and calculate its classification accuracy on the images generated by the VQ-VAE models. Further experimental details are given in \cref{section:experimental_details}.

\textbf{Results and discussion:} 
\cref{tab:vqvae} shows that \textit{ev-softmax} outperforms other sparse methods in accuracy and is able to generate images with the most resemblance to the ground truth. Our method reduces the number of latent classes by $85\%$ on average, more than the $83\%$ of \textit{entmax}-$1.5$ but less than the $91\%$ of \textit{sparsemax}. With the exception of \textit{post-hoc evidential sparsification}, the sparse methods all outperform \textit{softmax}, suggesting that sparsity improves the model's performance on the task of image generation by discarding irrelevant latent classes. The underperformance of the \textit{post-hoc evidential} model is expected, as \textit{post-hoc evidential sparsification} is applied only at test time to a model trained with the \textit{softmax} function, which bounds its performance. Since the VQ-VAE samples directly from the output distribution of the PixelCNN prior, the probabilistic nature of this downstream task suggests that optimizing the prior for log-likelihood allows the model to learn a prior distribution that achieves better performance on the downstream task.
\subsection{Machine Translation}
\begin{wrapfigure}[21]{r}{0.45\textwidth}
    \vspace{-2em}
    \centering
    \includegraphics[width=0.45\textwidth]{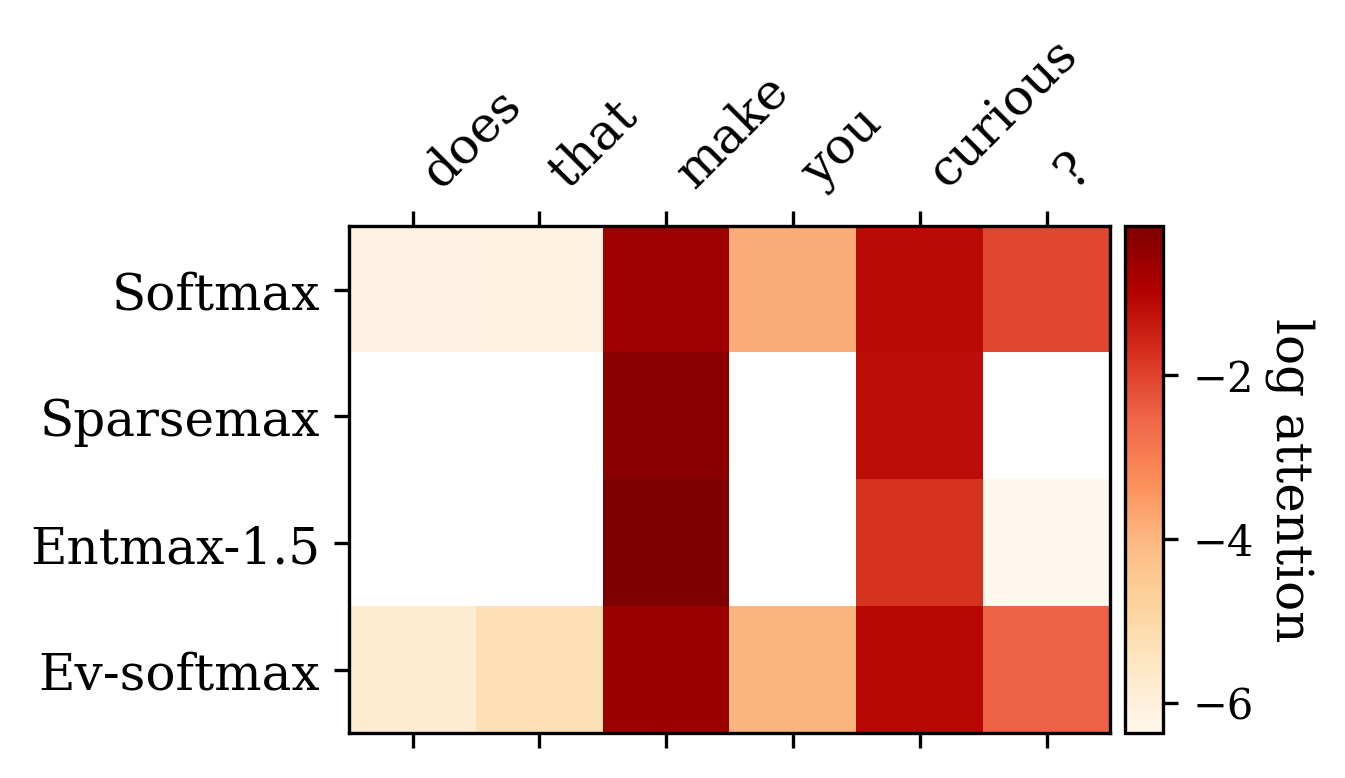}
    \caption{\small Attention weights produced by each normalization method for the first word of the translated sentence. \textit{Sparsemax} and \textit{entmax}-$1.5$ both attend over two words while \textit{ev-softmax} and \textit{softmax} attend over the entire source. The resulting predictions are \textit{macht} (\textit{ev-softmax}), \textit{Ist} (\textit{sparsemax}), and \textit{Machen} (\textit{entmax}-1.5, \textit{softmax}). The ground truth translation is \textit{macht sie das neugierig?}. Alhough the attention distributions of \textit{ev-softmax} and \textit{softmax} are similar in this example, \textit{ev-softmax} still significantly sparsifies attention away from superfluous inputs.}
    \label{fig:attention}
\end{wrapfigure}

\begin{table}[t]
    \vspace{-1.25em}
    \caption{\small BLEU, ROUGE, and METEOR scores, and number of nonzero elements in the attention on the EN$\to$DE corpus of IWSLT 2014. We report the mean and standard error for the BLEU score and number of nonzero elements over 5 random seeds. Our method outperforms all baselines in all metrics and maintains a higher number of nonzero attended elements than other sparse normalization functions. Higher is better for all metrics. The asterisk $^*$ denotes that ev-softmax outperformed the method with a significance level of $0.05$, and $^{**}$ denotes a significance of $0.01$. \\}
    \centering
    \begin{tabular}{@{}lrrrrr@{}} \toprule
    Metric         & Softmax         & Sparsemax       & Entmax-1.5      & Post-hoc Evidential & Ev-softmax       \\ \midrule
    BLEU           & $29.2 \pm 0.06$ & $29.0 \pm 0.05$ & $29.2 \pm 0.07$ & $29.2 \pm 0.05$     & $\mathbf{29.4 \pm 0.05}$  \\
    ROUGE-1        & 59.31           & $58.47^{**}$    & $58.94^{*}$     & $59.09^{*}$         & \textbf{59.32}            \\
    ROUGE-2        & 35.62           & $34.76^{**}$    & $35.20^{*}$     & $35.42^{*}$         & \textbf{35.74}            \\
    ROUGE-L        & 56.09           & $55.39^{**}$    & $55.75^{*}$     & $55.93^{*}$         & \textbf{56.18}            \\
    METEOR         & $57.02^*$       & $56.33^{**}$    & $5.83^{**}$     & $56.84^{**}$        & $\textbf{57.20}$          \\
    $\#$~attended  & $39.5 \pm 11.5$ & $2.3 \pm 0.54$  & $4.1 \pm 1.3$   & $3.8 \pm 0.93$      & $8.2 \pm 1.3$    \\ \bottomrule
    \end{tabular} \vspace{-1.5em}
    \label{tab:nmt}
\end{table}

We evaluate our method in the setting of machine translation. Attention-based translation models generally employ dense attention weights over the entire source sequence while traditional statistical machine translation systems are usually based on a lattice of sparse phrase hypotheses~\cite{correia_efficient_2020}. Thus, sparse attention offers the possibility of more interpretable attention distributions which avoid wasting probability mass on irrelevant source words. Furthermore, language modeling often requires the context of multiple words to correctly identify dependencies within an input sequence \cite{vaswani2017attention}. Hence, we investigate the impact of the multimodality of our proposed normalization function on the quality of translations.

\textbf{Model:} We train transformer models for each of the sparse normalization functions using the OpenNMT Neural Machine Translation Toolkit \cite{klein-etal-2020-opennmt}. Each normalization function is used in the self-attention layers. We use byte pair encoding (BPE) to encode rare and unknown words and ensure an open vocabulary~\cite{DBLP:journals/corr/SennrichHB15}. Due to computational constraints, we finetune transformer models on the English-German translation dataset from IWSLT 2014~\cite{cettolo2017overview}. These models were pretrained on the EN-DE corpus from WMT 2016~\cite{bojar2016findings}. We finetune for 40000 iterations. Hyperparameters and further experimental details are given in \cref{section:experimental_details}.

\textbf{Results and discussion:} 
As shown in \cref{tab:nmt}, our method obtains the highest BLEU~\cite{papineni2002bleu}, METEOR~\cite{denkowski2011meteor}, and ROUGE~\cite{lin2004rouge} scores of the considered normalization functions while reducing the average number of words attended to 8.2. In comparison with the other sparse normalization functions, ours maintains the highest number of attended source words. This supports the notion that attending over a subset of the source words benefits performance, but the performance of \textit{sparsemax} and \textit{entmax}-$1.5$ could be hurt by attending over too few words. For short sentences, \textit{ev-softmax} maintains nonzero attention over the entire sentence, as \cref{fig:attention} demonstrates. This allows for the model to identify relationships between words in the entire source sequence, such as tense and conjugation, enabling more accurate translation.

\section{Related Work} \label{secction:related_work}
\textbf{Softmax Alternatives}: There is considerable interest in alternatives to the \textit{softmax} function that provide greater representational capacity~\cite{kanai_sigsoftmax_2018,memisevic_gated_nodate} and improved discrimination of features \cite{liu_large-margin_nodate}. \textit{Sparsemax}~\cite{correia_efficient_2020}, its generalization $\alpha$-\textit{entmax} \cite{peters_sparse_2019}, and its \textit{maximum a posteriori} adaptation SparseMAP~\cite{niculae_regularized_2019} introduce sparse normalization functions and corresponding loss functions. While the $\alpha$-\textit{entmax} family of functions contains a parameter $\alpha$ to control the level of sparsity, the functions are all sparse for $\alpha > 1$ with no known continuous approximation with full support. Therefore, these methods are all trained with alternative loss functions that do not directly represent the likelihood of the data. Another controllable sparse alternative to \textit{softmax} is \textit{sparsehourglass}~\cite{laha_controllable_nodate}. We baseline against only \textit{sparsemax} and \textit{entmax}-$1.5$ as more research has been conducted on these two methods and \textit{entmax}-$1.5$ is semantically similar to \textit{sparsehourglass}. In contrast, we develop a normalization function which has a continuous family of approximations that can be used during training time with the negative log-likelihood and KL divergence loss terms. This flexibility enables our method to substitute the \textit{softmax} transformation in any context and model.

\textbf{Discrete Variational Autoencoders}: Recent advancements in VAEs with discrete latent spaces have introduced strategies for learning discrete latent spaces to tackle a wide-ranging set of tasks including image generation~\cite{oord2017neural,razavi2019generating}, video prediction~\cite{walker2021predicting}, and speech processing~\cite{garbacea2019low}. For discrete VAEs, the gradient of the ELBO usually cannot be calculated directly \cite{jang_categorical_2017}. While the SFE~\cite{williams1992simple,glynn1990likelihood} enables Monte Carlo-based approaches to estimate the stochastic gradient of the ELBO, it suffers from high variance and is consequently slow to converge. Continuous relaxation via the Gumbel-Softmax distribution allows for an efficient and convenient gradient calculation through discrete latent spaces~\cite{jang_categorical_2017,maddison2016concrete}, but the gradient estimation is biased and requires sampling during training. Our work, in contrast, enables direct marginalization over a categorical space without sampling. Similar to our work, \citet{correia_efficient_2020} apply \textit{sparsemax} to marginalize discrete VAEs efficiently. While we also generate a sparse distribution over the latent space as in~\citet{correia_efficient_2020}, we better balance sparsity with the multimodality of the distribution. Prior work on evidential sparsification focused on a post-hoc method for discrete CVAEs~\cite{itkina_evidential_2021}. We extend this method to more classes of generative models and develop a modification to apply the method at training time.

\textbf{Sparse Attention}: In machine translation, sparse attention reflects the intuition that only a few source words are expected to be relevant for each translated word. The global attention model considers all words on the source side for each target word, which is potentially prohibitively expensive for translating longer sequences~\cite{luong2015effective}. The \textit{softmax} transformation has been replaced with sparse normalization functions, such as the \textit{sparsemax} operation~\cite{malaviya2018sparse} and its generalization, $\alpha$-\textit{entmax}~\cite{peters2019sparse}. Like our work, both \textit{sparsemax} and $\alpha$-\textit{entmax} can substitute \textit{softmax} in the attention layers of machine translation models. Prior work on sparse normalization functions for attention layers has focused on global attention layers in recurrent neural network models. In contrast, we study the effect of incorporating these sparse normalization functions in transformers, where they are used in self-attention layers.
\section{Conclusion} \label{section:conclusion}
We present \textit{ev-softmax}, a novel method to train deep generative models with sparse probability distributions. We show that our method satisfies many of the same theoretical properties as softmax and other sparse alternatives to softmax. By balancing the goals of multimodality and sparsity, our method yields better distributional accuracy and classification performance than baseline methods. Future work can involve training large computer vision and machine translation models from scratch with our method and examining its efficacy at a larger scale on more complex downstream tasks.

\section{Broader Impact} \label{section:broader_impact}
Our work focuses on a method to train generative models that output sparse probability distributions. We apply our method in the domains of image generation and machine translation. Our work has the potential to improve the explainability and introspection of generative models. Furthermore, our work aims to reduce the computational cost of training generative models, which are used in unsupervised and semi-supervised approaches to learning. Although our empirical validation shows that our method can improve the accuracy of generated distributions, the use of sparse probability distributions can also amplify any inherent bias present in the data or modeling assumptions. One limitation of our work is that we do not provide explicit theoretical guarantees on the conditions for yielding an output probability of zero for a class. We hope that our work motivates further research into improving the accuracy of generative modeling and reducing bias of both models and data.


\newpage 
\printbibliography

\newpage


\appendix
\section{Post-Hoc Evidential Sparsification}
\label{section:evidential_sparsification_appendix}
\subsection{Evidential Theory and Logistic Regression}
In evidential theory, also known as Dempster-Shafer theory~\cite{dst}, a \textit{mass function} on set $\mathcal{Z}$ is a mapping $m: 2^{\mathcal{Z}} \to [0, 1]$ such that $m(\emptyset) = 0$ and, 
\begin{equation}
\sum_{\mathcal{A} \subseteq \mathcal{Z}} m(\mathcal{A}) = 1.
\end{equation}
Two mass functions $m_1$ and $m_2$ representing independent items of evidence can be combined using Dempster's rule~\cite{dst} as,
\begin{equation}
    (m_1 \oplus m_2)(\AA) := \frac{1}{1 - \kappa} \sum_{\BB \cap \CC = \AA} m_1(\BB) m_2(\CC),
\end{equation}
for all $\AA \subseteq \ZZ, \AA \ne \emptyset$, and $(m_1 \oplus m_2)(\emptyset) := 0$, where $\kappa$ is the \textit{degree of conflict} between the two mass functions, defined as,
\begin{equation}
    \kappa := \sum_{\BB \cap \CC = \emptyset} m_1(\BB) m_2(\CC).
\end{equation}

Classifiers that transform a linear combination of features through the softmax function can be formulated as evidential classifiers as follows \cite{denoeux2019logistic}. Suppose $\boldsymbol{\phi} \in \R^J$ is a feature vector and $\ZZ$ is the set of classes, with $|\ZZ| = K$, $z_k \in \ZZ$ for $k \in \{1, \ldots, K\}$. For each $z_k$, the evidence of feature $\phi_j$ is assumed to point either to the singleton $\{z_k\}$ or to its complement $\overline{\{z_k\}}$, depending on the sign of
\begin{equation}
    w_{jk} := \beta_{jk}\phi_j + \alpha_{jk},
\end{equation}
where $(\beta_{jk}, \alpha_{jk})$ are parameters \cite{denoeux2019logistic}. Then \citet{denoeux2019logistic} uses these weights to write mass functions
\begin{align}
    m_{kj}^+(\{z_k\}) &= 1 - e^{-w_{jk}^+}, \quad m_{kj}^+(\ZZ) = e^{-w_{jk}^+} \\
    m_{kj}^-(\overline{\{z_k\}}) &= 1 - e^{-w_{jk}^-}, \quad m_{kj}^-(\ZZ) = e^{-w_{jk}^-}.
\end{align}
These masses can be fused through Dempster's rule to arrive at the mass function at the output of the softmax layer as follows,
\begin{align}
    m(\{z_k\}) &\propto e^{-w_k^-} \left(e^{w_k^+} - 1 + \prod_{\ell \ne k} \left(1 - e^{-w_\ell^-}\right)\right) \label{eqn:dempster_zk}\\
    m(\AA) &\propto \left(\prod_{z_k \ne \AA} \left(1 - e^{-w_k^-}\right)\right)\left(\prod_{z_k\ \in \AA} e^{-w_k^-}\right), \label{eqn:dempster_a}
\end{align}
where $\AA \subseteq \ZZ, |\AA| > 1, w_k^- = \sum_{j = 1}^J w_{jk}^-$, and $w_k^+ = \sum_{j=1}^J w_{jk}^+$.

\subsection{Evidential Sparsification}
If no evidence directly supports class $k$ and there is no evidence contradicting another class $\ell$, then the belief mass for the singleton set $\{z_k\}$ is zero~\cite{itkina_evidential_2021}. That is, if $w_k^+ = 0$ and $w_\ell^- = 0$ for at least one other class $\ell \ne k$, then $m(\{z_k\}) = 0$.

Given a neural network, \citet{itkina_evidential_2021} construct evidential weights from the output of the last hidden layer in a neural network $\boldsymbol{\phi} \in \R^J$ and output linear layer weights $\hat{\boldsymbol{\beta}} \in \R^{J \times K}$ as follows:
\begin{align}
    w_{jk} &= \beta_{jk} \phi_j + \alpha_{jk},
\end{align}
where $\beta_{jk} = \hat{\beta}_{jk} - \frac{1}{K} \sum_{\ell = 1}^K \hat{\beta}_{j\ell}$ and $\alpha_{jk} = \frac{1}{J} \left(\beta_{0k} + \sum_{j=1}^J \beta_{jk} \phi_j)\right) - \beta_{jk} \phi_j$.
These evidential weights $w_{jk}$ do not depend on $j$ (i.e. $w_{0k} = w_{1k} = \ldots = w_{Jk}$), which implies that $w_k^+~=~\max(0, w_k)$ and $w_k^-~=~\max(0, -w_k)$. The singleton mass function (\cref{eqn:dempster_zk}) obtained by fusing these weights is then used to identify latent classes to be filtered as follows:
\begin{equation}\label{eqn:filtered_prob}
    p_{\text{filtered}}(z_k | \boldsymbol{\phi}) \propto \ii\{m(\{z_k\}) \ne 0\} p_{\text{softmax}}(z_k | \boldsymbol{\phi}),
\end{equation}
where $p_{\text{softmax}}(z_k | \boldsymbol{\phi})$ is the $k$th element of the distribution obtained from applying the softmax transformation to the vector $\bhat^T\boldsymbol{\phi}$.
\section{Equivalence to Post-Hoc Evidential Sparsification Function}
\label{section:derivation}

In this section, we derive equivalence of the evidential softmax (\textit{ev-softmax}) function and the post-hoc sparsification function introduced by~\citet{itkina_evidential_2021}. We begin with two definitions of the post-hoc evidential sparsification method.

\begin{definition} \label{def:evidential_weights}
Given a feature vector $\bphi \in \R^J$, weights $\bbhat \in \R^{J \times K}$, and a bias vector $\bahat \in \R^K$, the \textbf{evidential weights matrix} $\textbf{W} \in \R^{J \times K}$ is defined element-wise as
\begin{equation}\label{eqn:evidential_weights_matrix}
    w_{jk} = \frac{1}{J}\left(\alpha_k + \sum_{j=1}^J \beta_{jk} \phi_j\right),
\end{equation}
where $\beta_{jk} = \bhat_{jk} - \frac{1}{K} \sum_{\ell = 1}^K \bhat_{j\ell}$ and $\alpha_k = \ahat_k - \frac{1}{K} \sum_{\ell = 1}^K \ahat_\ell$ are normalized weights and bias terms, respectively.
\end{definition}

\begin{definition} \label{def:evidential_class_weights}
Define the \textbf{evidential class weights} $\boldsymbol{w}, \boldsymbol{w}^+, \boldsymbol{w}^- \in \R^K$ element-wise as
\begin{align} \label{eqn:evidential_weights_vec}
    w_k &= \sum_{j=1}^J w_{jk} = \alpha_k + \sum_{j=1}^J \beta_{jk} \phi_j \\
    w_k^+ &= \sum_{j=1}^J \max(0, w_{jk}) = \max\left(0, \alpha_k + \sum_{j=1}^J \beta_{jk} \phi_j\right) = \max(0, w_k) \\
    w_k^- &= \sum_{j=1}^J \max(0, -w_{jk}) = \max\left(0, -\alpha_k - \sum_{j=1}^J \beta_{jk} \phi_j\right) = \max(0, -w_k).
\end{align}
In matrix notation, we write this as $\boldsymbol{w}^+ = \max(0, \boldsymbol{\alpha} + \boldsymbol{\beta}^T \bphi), \boldsymbol{w}^- = \max(0, -\boldsymbol{\alpha} - \boldsymbol{\beta}^T \bphi)$, where the $\max$ operator is applied at each index independently. Note this is the same formulation as defined in~\cref{section:evidential_sparsification_appendix}.
\end{definition}

We prove a lemma showing an equivalent definition that involves centering the output class weights $\boldsymbol{w}$ instead of centering the input weights $\bahat, \bbhat$.

\begin{lemma}\label{lemma:centered_weights}
For $\bphi \in \R^J, \bbhat\in\R^{J\times K}, \bahat\in \R^K$, and $\boldsymbol{w} \in \R^{K}$ as defined in \cref{def:evidential_weights},
\begin{equation}
    w_{k} = \ahat_k + \sum_{j=1}^J \bhat_{jk} \phi_j - \frac{1}{K} \sum_{\ell = 1}^K \left(\ahat_\ell + \sum_{m=1}^J \bhat_{m\ell} \phi_m\right).
\end{equation}
\end{lemma}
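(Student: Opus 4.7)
The plan is to prove this by direct substitution of the definitions of $\alpha_k$ and $\beta_{jk}$ given in \cref{def:evidential_weights} into the expression for $w_k$ from \cref{def:evidential_class_weights}, and then regroup terms. This is essentially an algebraic identity, so there is no real obstacle beyond careful bookkeeping of indices.

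First, I would start from $w_k = \alpha_k + \sum_{j=1}^J \beta_{jk}\phi_j$ and plug in $\alpha_k = \ahat_k - \tfrac{1}{K}\sum_{\ell=1}^K \ahat_\ell$ and $\beta_{jk} = \bhat_{jk} - \tfrac{1}{K}\sum_{\ell=1}^K \bhat_{j\ell}$. After distributing $\phi_j$ across the two summands defining $\beta_{jk}$, the expression breaks into four terms: $\ahat_k$, $\sum_{j=1}^J \bhat_{jk}\phi_j$, $-\tfrac{1}{K}\sum_{\ell=1}^K \ahat_\ell$, and $-\sum_{j=1}^J \tfrac{1}{K}\sum_{\ell=1}^K \bhat_{j\ell}\phi_j$.

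The only nontrivial manipulation is swapping the order of summation in the last term. Since the sums are finite, I can rewrite $\sum_{j=1}^J \tfrac{1}{K}\sum_{\ell=1}^K \bhat_{j\ell}\phi_j = \tfrac{1}{K}\sum_{\ell=1}^K \sum_{j=1}^J \bhat_{j\ell}\phi_j$ (relabeling the inner dummy index from $j$ to $m$ to match the statement). Combining this with the $-\tfrac{1}{K}\sum_{\ell=1}^K \ahat_\ell$ term under a common factor of $-\tfrac{1}{K}\sum_{\ell=1}^K$ yields the claimed expression $w_k = \ahat_k + \sum_{j=1}^J \bhat_{jk}\phi_j - \tfrac{1}{K}\sum_{\ell=1}^K\bigl(\ahat_\ell + \sum_{m=1}^J \bhat_{m\ell}\phi_m\bigr)$.

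The ``hard part,'' insofar as there is one, is just keeping the three indices $j$, $k$, $\ell$ (and later $m$) straight and recognizing that the bracketed expression in the target equation is exactly the uncentered logit for class $\ell$, so the lemma is really the statement that $w_k$ equals the raw logit for class $k$ minus the mean logit over all classes. This interpretation is worth noting since it is precisely what \cref{lemma1} then needs in order to identify $w_k$ with $v_k - \overline{v}$.
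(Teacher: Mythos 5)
Your proposal is correct and follows essentially the same route as the paper: substitute the definitions of $\alpha_k$ and $\beta_{jk}$ into $w_k = \alpha_k + \sum_{j=1}^J \beta_{jk}\phi_j$, distribute, swap the order of summation in the double sum (relabeling the inner index to $m$), and regroup under the common factor $-\tfrac{1}{K}\sum_{\ell=1}^K$. Your closing observation that the lemma just says $w_k$ is the raw logit minus the mean logit is exactly the role it plays in the subsequent equivalence argument.
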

\begin{proof}
We substitute the definitions of $\boldsymbol{\alpha}$ and $\boldsymbol{\beta}$ into \cref{eqn:evidential_weights_matrix} to obtain
\begin{equation}
\begin{aligned}
    w_{k} &= \alpha_k + \sum_{j=1}^J \beta_{jk} \phi_j \\
    &= \left(\ahat_k - \frac{1}{K} \sum_{\ell = 1}^K \ahat_\ell\right) + \sum_{j=1}^J \left(\bhat_{jk} - \frac{1}{K} \sum_{\ell = 1}^K \bhat_{j\ell}\right) \phi_j \\
    &= \ahat_k - \frac{1}{K} \sum_{\ell = 1}^K \ahat_\ell + \sum_{j=1}^J \left(\bhat_{jk}\right) - \frac{1}{K} \sum_{\ell = 1}^K \sum_{m = 1}^J\bhat_{m\ell} \phi_m \\
    &= \ahat_k + \sum_{j=1}^J \bhat_{jk} \phi_j - \frac{1}{K} \sum_{\ell = 1}^K \left(\ahat_\ell + \sum_{m=1}^J \bhat_{m\ell} \phi_m\right).
\end{aligned}
\end{equation}
\end{proof}
Note as a consequence that the sum of the evidential weights is zero. That is,
\begin{equation}
\begin{aligned}
    \sum_{k=1}^K w_k &= \sum_{k=1}^K \left(\ahat_k + \sum_{j=1}^J \bhat_{jk} \phi_j - \frac{1}{K} \sum_{\ell = 1}^K \left(\ahat_\ell + \sum_{m=1}^J \bhat_{m\ell} \phi_m\right)\right) \label{eqn:sum0_1}\\
    &= \sum_{k=1}^K \left(\ahat_k + \sum_{j=1}^J \bhat_{jk} \phi_j\right) - \sum_{\ell = 1}^K \left(\ahat_\ell + \sum_{m=1}^J \bhat_{m\ell} \phi_m\right)\\ &= 0.
\end{aligned}
\end{equation}

We define the \textbf{mass function} $m$ and \textbf{post-hoc evidential sparsification function} $f$ as in~\cref{eqn:dempster_zk,eqn:filtered_prob}. Next we prove a key lemma, which is a condition for determining which terms in the probability function $f$ are set to $0$ in the post-hoc evidential sparsification function.
\begin{lemma} \label{lemma:equivalent_sparse_condition}
For fixed $k \in \{1, \ldots, K\}$ and $\boldsymbol{w}, m$ as defined in \cref{def:evidential_class_weights} and \cref{eqn:dempster_zk}, $m\{z_k\} = 0$ if and only if $w_k \le 0$.
\end{lemma}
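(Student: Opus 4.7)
My plan is to reduce the lemma to an analysis of when each factor in the expression for $m(\{z_k\})$ vanishes, and then to exploit the fact, already established in \cref{eqn:sum0_1}, that the evidential class weights $w_1,\dots,w_K$ sum to zero.

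First, I would unpack $m(\{z_k\}) \propto e^{-w_k^-}\bigl(e^{w_k^+}-1+\prod_{\ell\ne k}(1-e^{-w_\ell^-})\bigr)$. Since $e^{-w_k^-}>0$ always, $m(\{z_k\})=0$ is equivalent to the bracketed sum being zero. Because $w_k^+\ge 0$ we have $e^{w_k^+}-1\ge 0$, and because $w_\ell^-\ge 0$ each factor $1-e^{-w_\ell^-}\ge 0$, so the bracketed sum is a sum of two nonnegative terms and vanishes iff both vanish. This gives the intermediate equivalence: $m(\{z_k\})=0$ iff $w_k^+=0$ and there exists $\ell\ne k$ with $w_\ell^-=0$. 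Translating through \cref{def:evidential_class_weights}, $w_k^+=0\iff w_k\le 0$ and $w_\ell^-=0\iff w_\ell\ge 0$, so
\[
m(\{z_k\})=0 \iff \bigl(w_k\le 0\bigr)\ \text{and}\ \bigl(\exists\,\ell\ne k: w_\ell\ge 0\bigr).
\]

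The forward direction of the lemma is then immediate. For the reverse direction, the nontrivial content is showing that the second condition is automatic once $w_k\le 0$. Here I would invoke \cref{eqn:sum0_1}, which gives $\sum_{\ell=1}^K w_\ell = 0$, hence $\sum_{\ell\ne k} w_\ell = -w_k \ge 0$. If every $w_\ell$ with $\ell\ne k$ were strictly negative, this sum would be strictly negative, a contradiction; therefore at least one $\ell\ne k$ satisfies $w_\ell\ge 0$, completing the equivalence.

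The only subtle step is this last one, which relies essentially on \cref{lemma:centered_weights}/\cref{eqn:sum0_1}: without the zero-sum property of the centered weights, $w_k\le 0$ would not by itself force the existence of a nonnegative $w_\ell$ elsewhere, and only the weaker biconditional in the displayed equation above would hold. Everything else is a direct unpacking of definitions and positivity of exponentials.
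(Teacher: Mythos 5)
Your proof is correct and follows essentially the same route as the paper's: both reduce to the equivalence $m(\{z_k\})=0 \iff w_k^+=0 \text{ and } \exists\,\ell\ne k:\ w_\ell^-=0$, and both close the reverse direction via the zero-sum property of the centered weights from \cref{eqn:sum0_1}. The only difference is that you derive that intermediate equivalence directly from the nonnegativity of the two summands in \cref{eqn:dempster_zk}, whereas the paper cites it as a known observation from \citet{itkina_evidential_2021}.
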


\begin{proof}
Take fixed $k \in \{1, \ldots, K\}$. For both directions, we rely on the key observation that $m(\{z_k\}) = 0$ if and only if the following two conditions are both true~\cite{itkina_evidential_2021}:
\begin{align}
    w_k^+ &= 0 \label{eqn:wk0} \\
    w_\ell^- &= 0 \text{ for some } \ell \ne k. \label{eqn:wl0}
\end{align}
We first prove the forward direction, that $m(\{z_k\}) = 0$ implies $w_k \le 0$. Using the observation above, $m(\{z_k\})$ implies that $w_k^+ = 0$. Since $w_k^+ = \max(0, w_k)$ by definition, we see that $w_k \le 0$ as desired.

To prove the reverse, note that $w_k \le 0$ implies that $w_k^+ = \max(0, w_k) = 0$. Now since the evidential class weights have a sum of 0 (\cref{eqn:sum0_1}). If $w_k \le 0$, then either $\boldsymbol{w}$ is the zero vector or $\boldsymbol{w}$ must contain some positive element $w_\ell > 0$, where $\ell \ne k$. In either case, there must exist $\ell \ne k$ such that $w_\ell^- = 0$. Hence, $w_k^+ = 0$ and the existance of $\ell \ne k$ such that $w_\ell^- = 0$ implies that $m(\{z_k\}) = 0$ as desired.
\end{proof}

\cref{lemma:equivalent_sparse_condition} leads to the following natural definition for the \textit{ev-softmax} function $\evidentialsoftmax'$.
\begin{definition} \label{def:sparse_softmax}
Given a vector $\bvhat \in \R^K$, define the \textbf{evidential softmax function} $\evidentialsoftmax': \R^K \to \Delta^K$ as
\begin{equation} \label{eqn:sparse_softmax_cases}
    \evidentialsoftmax'(\bvhat)_k = \begin{cases}
    \frac{1}{K} & \text{if } \textbf{v} = \textbf{0} \\
    \frac{\ii\{v_k > 0\} e^{v_k}}{\sum_{\ell = 1}^K \ii\{v_k > 0\} e^{v_\ell}} & \text{otherwise}
    \end{cases}
\end{equation}
where $\textbf{v} = \bvhat - \frac{1}{K} \sum_{k=1}^K \vhat_k$ is centered to have a mean of $0$.

We now define a function which is equivalent under certain conditions. If the marginal probability measure of $v_k$ is non-atomic for each $k$, then the following function is equal to \cref{eqn:sparse_softmax_cases} with probability $1$:
\begin{equation} 
    \evidentialsoftmax(\textbf{v}) = \frac{\ii\{v_k \ge 0\} e^{v_k}}{\sum_{\ell = 1}^K \ii\{v_k \ge 0\} e^{v_\ell}}.
    \label{eqn:sparse_softmax_concise}
\end{equation}
$\evidentialsoftmax$' is nearly equivalent to $\evidentialsoftmax$, with the difference lying in the equality condition in the indicator function.
\end{definition}

We formalize the equivalence of \cref{eqn:sparse_softmax_cases} and \cref{eqn:sparse_softmax_concise} with the following lemma.

\begin{lemma} \label{lemma:equivalent_sparse_condition_2}
With $\bvhat$ and $\textbf{v}$ defined as in \cref{def:sparse_softmax}, if the marginal probability measure of $v_k$ is non-atomic for each $k$, then Equations \cref{eqn:sparse_softmax_cases} and \cref{eqn:sparse_softmax_concise} are equal with probability $1$.
\end{lemma}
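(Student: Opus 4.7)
The plan is to show that Equations~(\ref{eqn:sparse_softmax_cases}) and~(\ref{eqn:sparse_softmax_concise}) coincide except on a set of inputs that has probability zero under the non-atomicity hypothesis. I will proceed by a short case analysis on the centered vector $\textbf{v}$ and then invoke a union bound, so the argument reduces to a routine measure-theoretic observation rather than any computation with the two formulas.

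First I would verify that on ``generic'' inputs the two formulas agree pointwise. Specifically, if $v_k \ne 0$ for every $k$, then $\ii\{v_k > 0\} = \ii\{v_k \ge 0\}$ for every $k$, so the piecewise expression~(\ref{eqn:sparse_softmax_cases}) and the concise form~(\ref{eqn:sparse_softmax_concise}) reduce to the same ratio. If instead $\textbf{v} = \textbf{0}$, the special-case clause of~(\ref{eqn:sparse_softmax_cases}) returns $1/K$, while~(\ref{eqn:sparse_softmax_concise}) gives $\ii\{0 \ge 0\}\, e^{0} / K = 1/K$ in every coordinate, so the two also agree here. Consequently, the only inputs on which the formulas can disagree are those with $\textbf{v} \ne \textbf{0}$ but some coordinate $v_k$ exactly equal to $0$, where the strict and non-strict indicators produce different supports.

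Next I would bound the probability of this exceptional set by a union bound:
\[
\mathrm{Pr}\!\left(\exists\, k \in \{1,\ldots,K\} : v_k = 0\right) \;\le\; \sum_{k=1}^{K} \mathrm{Pr}(v_k = 0) \;=\; 0,
\]
where the final equality uses non-atomicity of each marginal distribution of $v_k$. Combined with the previous paragraph, this yields the almost-sure equality claimed in the lemma.

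The main subtlety, rather than a genuine obstacle, is being careful about the piecewise structure: one must note that the $\textbf{v} = \textbf{0}$ case is handled by~(\ref{eqn:sparse_softmax_concise}) without a separate clause because every indicator is active there, keeping the denominator equal to $K \ne 0$. The lemma is stated directly in terms of the marginals of the centered vector $\textbf{v}$, so no transfer from $\bvhat$ to $\textbf{v}$ is required, and the overall proof is essentially a two-line observation once the exceptional set is correctly identified.
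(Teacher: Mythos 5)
Your proof is correct and follows essentially the same route as the paper: pointwise agreement of the two formulas off the set $\cup_k \{v_k = 0\}$, followed by a union bound and non-atomicity to show that set has probability zero. Your additional check that the two expressions also coincide at $\textbf{v} = \textbf{0}$ is a small (correct) refinement of the exceptional set that the paper does not bother with, since the union bound already covers it.
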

\begin{proof}
We can see that the expressions for $\evidentialsoftmax(\textbf{v})$ and $\evidentialsoftmax'(\textbf{v})$ are equal for all $\textbf{v}$ such that $v_k \ne 0$ for all $k$. Now we can apply the union bound to this event as follows:
\begin{equation*}
    P(\cap_k \{v_k \ne 0 \}) = 1 - P(\cup_k \{v_k = 0\}) \ge 1 - \sum_k P(\{v_k = 0\}) = 1
\end{equation*}
where the last equality follows from the assumption that each $v_k$ has a non-atomic distribution.
\end{proof}

We are now ready to prove the main result.
\begin{theorem}
Given a feature vector $\bphi \in \R^J$, weights $\bbhat \in \R^{J \times K}$, and bias vector $\boldsymbol{\ahat} \in \R^K$, define the \textbf{evidential class weights} $\boldsymbol{w}, \boldsymbol{w}^+, \boldsymbol{w}^- \in \R^K$ as in \cref{def:evidential_weights}, the \textbf{post-hoc evidential sparsification function} $f: \R^K \to \Delta^K$ as in \cref{eqn:filtered_prob}, and $\evidentialsoftmax$ as in \cref{eqn:sparse_softmax_concise}.

If the marginal distributions of the weights $w_k$ are non-atomic for all $k \in \{1, \ldots, K\}$, then the following equality holds with probability $1$:
\begin{equation}
    f(\boldsymbol{w}) = \evidentialsoftmax(\bbhat^T \bphi + \bahat).
\end{equation}
\end{theorem}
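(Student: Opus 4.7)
My plan is to decompose $f(\boldsymbol{w})_k$ via its definition in~\cref{eqn:filtered_prob} as the product of the indicator $\ii\{m(\{z_k\})\ne 0\}$ and the ordinary softmax weight $p_{\text{softmax}}(z_k\mid\bphi)\propto e^{\vhat_k}$, where I write $\vhat_k := \ahat_k + \sum_{j=1}^J \bhat_{jk}\phi_j$ for the uncentered logits so that the argument of $\evidentialsoftmax$ in the statement is exactly $\bvhat$. I will then treat the two factors separately. \Cref{lemma:equivalent_sparse_condition} converts the mass-function indicator into $\ii\{w_k>0\}$, and \cref{lemma:centered_weights} identifies $w_k = \vhat_k - \overline{\vhat}$ where $\overline{\vhat}=\tfrac{1}{K}\sum_{\ell} \vhat_\ell$. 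Substituting this into the softmax factor gives $e^{\vhat_k} = e^{\overline{\vhat}}\, e^{w_k}$, and the $k$-independent prefactor $e^{\overline{\vhat}}$ cancels after renormalization, leaving
$$f(\boldsymbol{w})_k \;=\; \frac{\ii\{w_k>0\}\,e^{w_k}}{\sum_{\ell=1}^K \ii\{w_\ell>0\}\,e^{w_\ell}}.$$

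Next, I would invoke the non-atomicity hypothesis to upgrade strict to non-strict inequalities. By assumption each marginal distribution of $w_k$ satisfies $P(w_k=0)=0$, and a union bound then gives $P(\exists k : w_k=0)=0$, so $\ii\{w_k>0\}=\ii\{w_k\ge 0\}$ for all $k$ simultaneously with probability one. On that event, the displayed ratio matches the defining formula of $\evidentialsoftmax(\boldsymbol{w})$ in \cref{eqn:sparse_softmax_concise}, exactly as \cref{lemma:equivalent_sparse_condition_2} passes from $\evidentialsoftmax'$ to $\evidentialsoftmax$ under the same hypothesis.

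Finally, since $\boldsymbol{w} = \bbhat^T\bphi + \bahat - \overline{\vhat}\,\mathbf{1}$ differs from the theorem's argument $\bbhat^T\bphi+\bahat$ only by an additive scalar multiple of $\mathbf{1}$, translation invariance of $\evidentialsoftmax$ — inherited from the main-body form $\propto \ii\{v_k\ge \overline{v}\}e^{v_k}$, or equivalently from the fact that the concise formula is applied post-centering by construction — yields $\evidentialsoftmax(\boldsymbol{w}) = \evidentialsoftmax(\bbhat^T\bphi+\bahat)$, completing the chain of equalities with probability one.

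The main obstacle, as in the main-body sketch preceding \cref{thm}, is the careful bookkeeping between the \emph{centered} evidential weights $\boldsymbol{w}$ and the \emph{uncentered} logits $\bbhat^T\bphi+\bahat$, and ensuring that the boundary case $w_k=0$ — where the exact post-hoc criterion $m(\{z_k\})=0$ parts ways with the clean indicator $\ii\{w_k\ge 0\}$ appearing in $\evidentialsoftmax$ — is absorbed by the non-atomicity hypothesis rather than left as a genuine discrepancy. Everything else is algebraic substitution powered by \cref{lemma:centered_weights,lemma:equivalent_sparse_condition,lemma:equivalent_sparse_condition_2}.
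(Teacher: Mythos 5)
Your proposal is correct and follows essentially the same route as the paper's proof: both rest on \cref{lemma:centered_weights} to identify $\boldsymbol{w}$ with the centered logits, \cref{lemma:equivalent_sparse_condition} to translate the mass-function indicator into a sign condition on $w_k$, and the non-atomicity/union-bound argument of \cref{lemma:equivalent_sparse_condition_2} to absorb the boundary case $w_k=0$. The only cosmetic difference is that you expand $f(\boldsymbol{w})$ forward into the indicator-times-softmax form and match it to $\evidentialsoftmax$, whereas the paper rewrites $\evidentialsoftmax(\bbhat^T\bphi+\bahat)$ in terms of $\boldsymbol{w}$ first; the substance is identical.
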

\begin{proof}
Let $\bvhat = \bbhat^T \bphi + \boldsymbol{\ahat}$. Then \cref{lemma:centered_weights} shows that $\boldsymbol{w}$ is equivalent to the normalization of $\bvhat$ (e.g. $w_k = \vhat_k - \frac{1}{K}\sum_{j=1}^K \vhat_j$), hence we have
\begin{equation}
    \evidentialsoftmax(\bbhat^T \bphi + \boldsymbol{\ahat})_{k} = \frac{\ii\{w_k \ge 0\} e^{w_k}}{\sum_{\ell = 1}^K \ii\{w_k \ge 0\} e^{w_\ell}}.
\end{equation}
Now \cref{lemma:equivalent_sparse_condition} and \cref{lemma:equivalent_sparse_condition_2} combined with the assumption that the marginal distributions of $w_k$ are non-atomic imply that $\ii \{m\{z_k\} \ne 0\} = \ii \{w_k \ge 0\}$ for all $k \in \{1, \ldots, K\}$ with probability $1$, and the result follows.
\end{proof}

Thus we show that the \textit{ev-softmax} function of \cref{eqn:sparse_softmax_concise} is equivalent to the post-hoc evidential sparsification function detailed in \cref{section:evidential_sparsification_appendix}.

\section{Properties of Evidential Softmax}
\label{section:properties}

\newcommand{\hv}{\hat{v}}
\newcommand{\hvb}{\hat{\mathbf{v}}}

We prove the following properties of the \textit{ev-softmax} transformation (\cref{eqn:evidentialsoftmax}):

\begin{enumerate}
    \item \textbf{Monotonicity}: If $v_i \ge v_j$, then $\evidentialsoftmax(\textbf{v})_i \ge \evidentialsoftmax(\textbf{v})_j$.
    \begin{proof} If $v_i \ge v_j$ then $\ii\{v_i \ge 0\} \ge \ii\{v_j \ge 0\} \ge 0$ and $e^{v_i} \ge e^{v_j} \ge 0$. Multiplying the two inequalities gives the desired result.\end{proof}

    \item \textbf{Full domain}: dom$(\evidentialsoftmax) = \R^K$.
    \begin{proof} Since the input vector is normalized, $\sum_{k=1}^{K} \ii\{v_k \ge \overline{v}\} e^{v_k}$ is guaranteed to be positive, so the function is defined for all $\boldsymbol{w} \in \R^K$ and always maps onto the simplex.\end{proof}

    \item \textbf{Existence of Jacobian}: The Jacobian is defined for all $\textbf{v} \in \R^K$ such that $v_k \ne \frac{1}{K} \sum_j v_j$ for all $k$, and it has the form
    \begin{align} 
        \frac{\partial \evidentialsoftmax(\textbf{v})_i}{\partial v_j} &= \frac{\ii\{\hv_i \ge 0\} \ii\{\hv_j \ge 0\} \left(\delta_{ij} e^{\hv_i} \sum_k  \ii\{\hv_k \ge 0\} e^{\hv_k} - e^{\hv_i}e^{\hv_j} \right) }{
        \left(\sum_k \ii\{\hv_k \ge 0\} e^{\hv_k}\right)^2} \label{eqn:jacobian_long} \\
        &= \evidentialsoftmax(\textbf{v})_{i}(\delta_{ij} - \evidentialsoftmax(\textbf{v})_{j}) \label{eqn:jacobian_short}
    \end{align}
    where $\hvb = v_k - \frac{1}{K} \sum_k v_k$.

    Furthermore, if the marginal probability measure of $\hv_k$ is non-atomic for each $k$, then the Jacobian is defined with probability $1$.
    
    \begin{proof} 
    Let $A := \{k \; | \; \hv_k \ge 0\}$. Then equivalently we can write \cref{eqn:jacobian_long} as 
    \begin{equation} \label{eqn:jacobian_long_equiv}
        \frac{\partial \evidentialsoftmax(\textbf{v})_{i}}{\partial v_j} = \begin{cases}
        0 & \text{if } i \not\in A \text{ or } j \not\in A \\
        \frac{\delta_{ij}e^{\hv_i} \sum_{k \in A} e^{\hv_k} - e^{\hv_i}e^{\hv_j}}{\left(\sum_{k \in A} e^{v_k}\right)^2} & \text{otherwise}.
        \end{cases}
    \end{equation}
    Take arbitrary $i \not\in A$, then $\evidentialsoftmax(\textbf{v})_{i} = 0$, and since by assumption $\hv_k \ne 0$ for all $k$, there must exist some $\epsilon > 0$ such that we have $\evidentialsoftmax(\textbf{v}')_{i} = 0$ for all $\textbf{v}'$ within a radius of $\epsilon$ from $\textbf{v}$. This implies that $\frac{\partial \evidentialsoftmax(\textbf{v})_{i}}{\partial v_j} = 0$ for all $j \in \{1, \ldots, K\}$.\\
    
    Now take arbitrary $i \in A$ and $j \not\in A$. Then by assumption, $\hv_j < 0$ which implies that there exists $\epsilon > 0$ such that for all $v'_j$ within $\epsilon$ of $v_j$, we have $\ii\{v_j' \ge 0\} = 0$. This means that $\evidentialsoftmax(\textbf{v})_{i}$ is independent of $v'_j$ in this neighborhood, giving $\frac{\partial \evidentialsoftmax(\textbf{v})_{i}}{\partial v_j} = 0$ as desired for all $i \in \{1, \ldots, K\}$.\\

    We prove the final case, where both $i, j \in A$. In this case, the expression for $g_i(\textbf{v})$ is exactly that of softmax over variables in $A$, so the Jacobian over all variables $j \in A$ must also match that of softmax (i.e. $\frac{\partial \softmax(\textbf{v})_{i}}{\partial v_j} = \softmax(\textbf{v})_{i}(\delta_{ij} - \softmax(\textbf{v})_{j})$), and \cref{eqn:jacobian_long_equiv} follows.\\

    Next, we show the equivalence of \cref{eqn:jacobian_long_equiv} to \cref{eqn:jacobian_short}. If $i \not\in A$ or $j \not\in A$, we can check by inspection that $\evidentialsoftmax(\textbf{v})_{i}(\delta_{ij} - \evidentialsoftmax(\textbf{v})_{j}) = 0$. Otherwise, $\evidentialsoftmax$ simply reduces to softmax over the indices in $A$, and the result follows from the analogous equation for softmax (i.e. $\frac{\partial \softmax(\textbf{v})_{i}}{\partial v_j} = \softmax(\textbf{v})_{i}(\delta_{ij} - \softmax(\textbf{v})_{j})$).\\
    
    Finally, we show that the Jacobian is defined with probability $1$, assuming that the probability measure of $\hv_k$ for each $k$ is nonatomic. The above shows the Jacobian is defined for all $\textbf{v}$ such that $\hv_k \ne 0$ for all $k$. Observe by the union bound that $P(\cap_k \{\hv_k \ne 0\}) \ge 1 - \sum_k P(\{\hv_k = 0\})$. For each $k$, the measure of $\hv_k$ is non-atomic, so $P(\{\hv_k = 0\}) = 0$ and the result follows.
    \end{proof}

    \item \textbf{Lipschitz continuity}: There exists $L \ge 0$ such that for any $\textbf{v}_1, \textbf{v}_2, ||\evidentialsoftmax(\textbf{v}_1) - \evidentialsoftmax(\textbf{v}_2)||_2 \le L ||\textbf{v}_1 - \textbf{v}_2||_2$. The evidential softmax function is Lipschitz with Lipschitz constant $1$ provided the two outputs $\evidentialsoftmax(\textbf{v}_1), \evidentialsoftmax(\textbf{v}_2)$ have the same support.
    
    \begin{proof}
        This follows from the Lipschitz continuity of the softmax function with Lipschitz constant $1$~\cite{laha_controllable_nodate}.
    \end{proof}

    \item \textbf{Translation invariance}: Adding a constant to evidential softmax does not change the output since the input is normalized around its mean.
    \begin{proof} This follows since all input vectors $v$ are normalized by their mean $\frac{1}{K} \sum_{j=1}^K v_j$. \end{proof}


    \item \textbf{Permutation invariance}: Like softmax, evidential softmax is permutation invariant.
    \begin{proof} This follows from the coordinate-symmetry in the equations in \cref{eqn:evidentialsoftmax}. \end{proof}

\end{enumerate}

\section{Gradient of the Evidential Softmax Loss}
\label{section:gradient}

In this section, we prove \cref{eqn:gradient}, which amounts to the claim that the gradient of the log likelihood of $\evidentialsoftmaxtrain$ (\cref{eqn:evidentialsoftmaxtrain}) approaches the same form as that of softmax as $\epsilon$ approaches $0$.
\begin{proof}
The existence of the Jacobian in \cref{eqn:jacobian_short} gives,
\begin{align}
\frac{\partial \evidentialsoftmax(\textbf{v})_i}{\partial v_j} &= \evidentialsoftmax(\textbf{v})_i(\delta_{ij} - \evidentialsoftmax(\textbf{v})_j)
\end{align}
for each input $\textbf{v}$ and each $i, j$. Now by the continuity of the exponential function, and therefore the continuity of the $\evidentialsoftmax$ function, it follows that
\begin{equation}
\begin{aligned}
\lim_{\epsilon \to 0} \evidentialsoftmaxtrain(\textbf{v})_i &= \evidentialsoftmax(\textbf{v})_i \\
\lim_{\epsilon \to 0} \delta_{ij} - \evidentialsoftmaxtrain(\textbf{v})_j &= \delta_{ij} - \evidentialsoftmax(\textbf{v})_j \\
\lim_{\epsilon \to 0} \frac{\partial \evidentialsoftmaxtrain(\textbf{v})_i}{\partial v_j} &=  \frac{\partial \evidentialsoftmax(\textbf{v})_i}{\partial v_j}.
\end{aligned}
\end{equation}
Therefore,
\begin{equation}
\begin{aligned}
    \lim_{\epsilon \to 0}  \frac{\partial \evidentialsoftmaxtrain(\textbf{v})_i}{\partial v_j} &= \lim_{\epsilon \to 0} \evidentialsoftmaxtrain(\textbf{v})_i(\delta_{ij} - \evidentialsoftmaxtrain(\textbf{v})_j) \\
    &= \evidentialsoftmax(\textbf{v})_i(\delta_{ij} - \evidentialsoftmax(\textbf{v})_j).
\end{aligned}
\end{equation}

To compute the gradient, we use the chain rule:
\begin{equation}
\begin{aligned}
    &\lim_{\epsilon\to 0} \frac{\partial}{\partial v_j} \log \evidentialsoftmaxtrain(\textbf{v})_i \\
    &= \lim_{\epsilon\to 0} \frac{1}{\evidentialsoftmaxtrain(\textbf{v})_i} \frac{\partial \evidentialsoftmaxtrain(\textbf{v})_i}{\partial v_{j}} \\
    &= \lim_{\epsilon\to 0} \frac{1}{\evidentialsoftmaxtrain(\textbf{v})_i} \lim_{\epsilon\to 0} \frac{\partial \evidentialsoftmaxtrain(\textbf{v})_i}{\partial v_{j}} \\
    &= \lim_{\epsilon\to 0} \frac{1}{\evidentialsoftmaxtrain(\textbf{v})_i} \lim_{\epsilon\to 0} \evidentialsoftmaxtrain(\textbf{v})_i(\delta_{ij} - \evidentialsoftmaxtrain(\textbf{v})_{j}) \\
    &= \lim_{\epsilon\to 0} \frac{1}{\evidentialsoftmaxtrain(\textbf{v})_{i}} \evidentialsoftmaxtrain(\textbf{v})_i(\delta_{ij} - \evidentialsoftmaxtrain(\textbf{v})_j) \\
    &= \lim_{\epsilon\to 0}  \delta_{ij} - \evidentialsoftmaxtrain(\textbf{v})_j \\
    &= \delta_{ij} - \evidentialsoftmax(\textbf{v})_j.
\end{aligned}
\end{equation}
\end{proof}

\section{Further Experimental Details}
\label{section:experimental_details}
All experiments were performed on a single local NVIDIA GeForce GTX 1070 or Tesla K40c GPU.
\subsection{MNIST CVAE}
We train a CVAE architecture~\cite{sohn_learning_nodate} with two multi-layer perceptrons (MLPs) for the encoder and one MLP for the decoder. For the MLPs of the encoder and decoder, we use two fully connected layers per MLP. For the encoder, we use hidden unit dimensionalities of $30$ for $p_\theta(\mathbf{z} \mid y)$ and $256$ for $q_\phi(\mathbf{z} \mid \mathbf{x}, y)$, and for the decoder, we use hidden unit dimensionality of $256$ for $p(\textbf{x}' \mid \textbf{z})$. We use the ReLU nonlinearity with stochastic gradient descent and a learning rate of 0.001. During training time, the Gumbel-Softmax reparameterization was used to backpropagate loss gradients through the discrete latent space~\cite{jang_categorical_2017,maddison2016concrete}. We train for 20 epochs with a batch size of 64. The standard conditional evidence lower bound (ELBO) of \cref{eqn:cvae} was maximized to train the model.

For each normalization function $f$ (\textit{softmax}, \textit{sparsemax}, \textit{entmax-}1.5, \textit{post-hoc evidential}, and \textit{ev-softmax}), we use the corresponding decoder to generate one image $\hat{x}_k^{(f)}$ for each latent class $z_k^{(f)} \in \big\{z_0^{(f)}, \ldots, z_9^{(f)}\big\}$. For each image $\hat{x}_k^{(f)}$, we use the trained classifier to calculate the probability distribution $Q\big(\tilde{\textbf{z}} \mid \hat{x}_k^{(f)}\big)$, which represents the probability distribution over the ten classes of handwritten digits inferred from the decoded image. Let $p_\theta(\textbf{z} \mid \textbf{y}; f)$ denote the prior distribution over the latent classes generated by normalization function $f$ with model parameterized by $\theta$. We define $p(\tilde{\textbf{z}} \mid \textbf{y}; f) = \sum_{k = 1}^{10} Q\big(\tilde{\textbf{z}} \mid \hat{x}_k^{(f)}\big) p_\theta(z_k \mid \textbf{y}; f)$ as the probability distribution over the ten classes of handwritten digits marginalized over the prior distribution. Then the Wasserstein distance is calculated between $p(\tilde{\textbf{z}} \mid y; f)$ and the uniform distribution over $\tilde{z}_1, \tilde{z}_3, \tilde{z}_5, \tilde{z}_7, \tilde{z}_9$ for $y = \text{odd}$, and $\tilde{z}_0, \tilde{z}_2, \tilde{z}_4, \tilde{z}_6, \tilde{z}_8$ for $y = \text{even}$.

\subsection{Semi-supervised VAE}
We use a classification network consisting of three fully connected hidden layers of size $256$, using ReLU activations. The generative and inference network both consist of one hidden layer of size $128$ with ReLU activations. The multivariate Gaussian has 8 dimensions with a diagonal covariance matrix. We use the Adam optimizer with a learning rate of $5 \cdot 10^{-4}$. For the labeled loss component of the semi-supervised objective, we use the \textit{sparsemax loss} and \textit{Tsallis loss} for the models using \textit{sparsemax} and \textit{entmax}-$1.5$, respectively, and we use the negative log-likelihood loss for the models using \textit{softmax} and \textit{ev-softmax}. Following~\citet{liu2019rao}, we pretrain the network with only labeled data prior to training with the whole training set.

\subsection{VQ-VAE}
We train the VQ-VAE~\cite{oord2017neural} network on \textit{tiny}ImageNet data normalized to $[-1, 1]$. The \textit{tiny}ImageNet dataset consists of 10,000 images and 200 classes. We use \textit{tiny}ImageNet due to its more computational feasible size for training on a single NVIDIA GeForce GTX 1070 GPU. We train the VQ-VAE with the default parameters from \texttt{https://github.com/ritheshkumar95/pytorch-vqvae}. We use a batch size of 128 for 100 epochs, $K = 512$ for the number of classes for each of the $16 \times 16$ latent variables, a hidden size of $128$, and a $\beta$ of one. The network was trained with the Adam optimizer~\cite{kingma2014adam} with a starting learning rate of $2 \times 10^{-4}$. We then train PixelCNN~\cite{oord2014pixelcnn} priors for each normalization function (\textit{softmax}, \textit{sparsemax}, \textit{entmax}-$1.5$, \textit{post-hoc evidential}, and \textit{ev-softmax}) over the latent space with $10$ layers, hidden dimension of $128$, and batch size of $32$ for $100$ epochs. The networks for each function were trained with the Adam optimizer over learning rates of $10^{-5}$ and $10^{-6}$. For each normalization function, the network was chosen by selecting the one with the lowest loss on the validation set. We generate a new dataset by sampling from the trained prior, and decoding the images using the VQ-VAE decoder. We sample $25$ latent encodings from the prior for each of the $200$ \textit{tiny}ImageNet training classes to build a dataset for each normalization function.

We then train Wide Residual Networks (WRN)~\cite{zagoruyko2016wrn} for classification on \textit{tiny}ImageNet. Each WRN is trained for $100$ epochs with a batch size of $128$. The optimizers are Adam with learning rates of $10^{-4}, 10^{-5},$ and $10^{-6}$. The best WRN was selected through validation accuracy. The inference performance of the WRN classifier on the datasets generated with the \textit{softmax}, \textit{sparsemax}, \textit{entmax}-$1.5$, and \textit{ev-softmax} distributions are compared, demonstrating that our distribution yields the best performance while significantly reducing the size of the latent sample space.

\cref{fig:vqvae_generated} shows sampled images generated using the proposed evidential softmax normalization function. The spatial structure of the queries is demonstrated in the generated samples. 

\begin{figure}
    \centering
    \includegraphics[width=1.5cm]{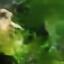}
    \includegraphics[width=1.5cm]{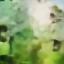}
    \includegraphics[width=1.5cm]{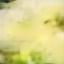}
    \includegraphics[width=1.5cm]{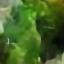}
    \includegraphics[width=1.5cm]{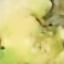} \\
    \includegraphics[width=1.5cm]{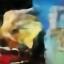}
    \includegraphics[width=1.5cm]{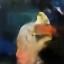}
    \includegraphics[width=1.5cm]{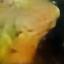}
    \includegraphics[width=1.5cm]{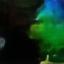}
    \includegraphics[width=1.5cm]{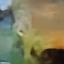} \\
    \includegraphics[width=1.5cm]{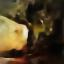}
    \includegraphics[width=1.5cm]{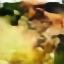}
    \includegraphics[width=1.5cm]{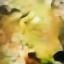}
    \includegraphics[width=1.5cm]{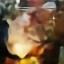}
    \includegraphics[width=1.5cm]{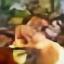}\\
    \caption{\small Generated images for the queries ``cauliflower'', ``jellyfish'', and ``mashed potato'' with the model using ev-softmax.}
    \label{fig:vqvae_generated}
\end{figure}

\subsection{Transformer}
We train transformer models~\cite{vaswani2017attention} on the English-German corpus of IWSLT 2014~\cite{cettolo2017overview}. The models were pretrained on the WMT 2016 English-German corpus~\cite{bojar2016findings}. We use the OpenNMT implementation~\cite{oord2014pixelcnn}. For each normalization function, we use a batch size of $1024$, word vector size of $512$, and inner-layer dimensionality of $2048$. We train with Adam with a total learning rate of $0.5$ using the same learning rate decay of~\citet{vaswani2017attention}, and we train over $50000$ iterations. We report tokenized test BLEU scores across five random seeds.

\end{document}